\newcommand{\PreserveBackslash}[1]{\let\temp=\\#1\let\\=\temp}
\newcolumntype{C}[1]{>{\PreserveBackslash\centering}p{#1}}
\newcolumntype{R}[1]{>{\PreserveBackslash\raggedleft}p{#1}}
\newcolumntype{L}[1]{>{\PreserveBackslash\raggedright}p{#1}}
\theoremstyle{thmstyleone}
\newtheorem{theorem}{Theorem}
\theoremstyle{thmstyleone}
\newtheorem{lemma}{Lemma}
\theoremstyle{thmstyleone}
\newtheorem{corollary}{Corollary}
\newcommand{\refeq}[1]{\eqref{eq:#1}}
\newcommand{\reffig}[1]{Fig.~\ref{fig:#1}}
\newcommand{\refthm}[1]{Theorem~\ref{thm:#1}}
\newcommand{\refsec}[1]{Section~\ref{sec:#1}}
\newcommand{\refapp}[1]{Appendix~\ref{app:#1}}
\newcommand{\mycase}[1]{\text{if $#1$}}
\newcommand{\otherwise}{\text{otherwise}}
\newenvironment{multipartdef}
  {\begin{cases}}
  {\end{cases}}
\newcommand{\di}{\delta}
\newcommand{\er}{\epsilon}
\newcommand{\cl}{\phi}
\newcommand{\op}{\gamma}
\newcommand{\E}[1]{\mathbb{E}\left[#1\right]}
\newcommand{\Dir}{\mathrm{Dir}}
\newcommand{\ps}[1]{\mathds{P}_{#1}}
\newcommand{\D}{\mathbb{D}}
\newcommand{\subeq}[1]{(\mathrm{#1})}
\begin{document}

\title[Morphology on categorical distributions]{Morphology on categorical distributions}

\author*[1,2]{\fnm{Silas Nyboe} \sur{Ørting}}\email{silas@di.ku.dk}
\author[1,2]{\fnm{Hans Jacob Teglbjærg} \sur{Stephensen}}
\author[1,2]{\fnm{Jon} \sur{Sporring}}
\affil[1]{\orgdiv{Department of Computer Science} \orgname{University of Copenhagen}}
\affil[2]{\orgname{QIM - Center for Quantification of Imaging Data from MAX IV}}

\abstract{Mathematical morphology (MM) is an indispensable tool for post-processing. Several extensions of MM to categorical images, such as multi-class segmentations, have been proposed. However, none provide satisfactory definitions for morphology on probabilistic representations of categorical images.

  The categorical distribution is a natural choice for representing uncertainty about categorical images. Extending MM to categorical distributions is problematic because categories are inherently unordered. Without ranking categories, we cannot use the standard framework based on supremum and infimum. Ranking categories is impractical and problematic. Instead, we consider the probabilistic representation and operations that emphasize a single category.
  
  In this work, we review and compare previous approaches. We propose two approaches for morphology on categorical distributions: operating on Dirichlet distributions over the parameters of the distributions, and operating directly on the distributions. We propose a ``protected'' variant of the latter and demonstrate the proposed approaches by fixing misclassifications and modeling annotator bias.}

\keywords{Mathematical morphology, Categorical distributions, Dirichlet distributions, Multi-class segmentation}
\maketitle

\section{Introduction}
Multi-class segmentation problems are common in analysis of biomedical images. A typical solution is to train a neural network pixel classifier. Commonly, these networks predict a probability distribution over all classes in each pixel, which can be thresholded to obtain a final segmentation. These predictions often contains holes, partial misclassifications, shrinkage of small classes and rough borders between classes, resulting in errors in the final segmentation. To improve the segmentation, post-processing is often used to close holes, reclassify uncertain pixel labels based on proximity, grow objects, and smoothen rough boundaries. 

Mathematical morphology is a powerful framework for post-processing binary and grayscale images. Binary and grayscale morphology are special cases of morphology on complete lattices \cite{serra1994morphological}. A complete lattice is a partially ordered set (poset), where each non-empty subset has an infimum and a supremum. For complete lattices the core operators, dilation and erosion, can be defined using supremum and infimum: for binary morphology using set union and intersection; for grayscale morphology using maximum and minimum under the standard total ordering of the reals, see \cite{serra1994morphological} for an in depth treatment of the theoretical foundations of mathematical morphology.

For general multi-class images, there is no natural ordering of the classes, and hence, they do not form a complete lattice.
For example, for a segmentation of microscope images of cells into cell membrane, mitochondria and background, any ordering of the classes is task dependent and not given by the images themselves. 
A natural representation of this kind of data is the categorical distribution, which can represent both crisp segmentation masks and uncertainty as encountered in prediction images. In the remainder of this work we will use the term ``categorical'' instead of ``multi-class''.

In this work, we provide a thorough review of previously proposed approaches to morphology on categorical images. We then propose two approaches for morphology on categorical distributions, an indirect approach where we operate on Dirichlet distributions that are then transformed to categorical distributions, and a direct approach where we operate on the categorical distributions themselves. We then define protected variants of the direct operations that allow finer control over the processing. Finally, we illustrate the utility of the proposed approach on two tasks: fixing misclassified mitochondria and modeling annotator bias.

\section{Background and related work}
In this section we briefly restate morphology on complete lattices and on binary and grayscale images, before we review the most relevant literature \cite{busch1995morphological,koppen2000pareto,hanbury2001morphological,ronse2005morphology,chevallier2016nary,vandegronde2017nonscalar,grossiord2019shape}. What we refer to as categorical images have various names in the literature: color-coded images, label images and n-ary images. In the sections below we will use the original names in the section titles, but otherwise we will refer to categorical images and categorical morphology.

In the literature, there are three main approaches for extending morphology to images with values that do not have a natural ordering: impose an order on the values, which is the common approach for color images;  operate on all categories simultaneously \cite{busch1995morphological,ronse2005morphology}; and operate on a single category at a time \cite{chevallier2016nary,vandegronde2017nonscalar}

Morphology on color images has received a lot of attention, with the primary focus on ordering colors by exploiting the relationship between dimensions of color spaces. See for example \cite{aptoula2007acomparative} for an overview of approaches for defining an ordering of colors. Our focus is on categorical images, where such approaches are less relevant.

\subsection{Morphology on complete lattices}
\label{sec:complete-lattice}
Let $\Gamma$ be a set with the partial order $\le$. The poset $(\Gamma, \le)$ is a complete lattice if every subset of $\Gamma$ has an infimum $\wedge$ and a supremum $\vee$.
We define an image as a function $f$ from pixel-coordinates $\D = \mathbb{Z}^d$ to $\Gamma$, and a structuring element $B$ as a subset of $\D$
\begin{align}
  \label{eq:image}
  f &\in \mathcal{F} = \left\{ g \mid g : \D \mapsto \Gamma \right\},  \\
  \label{eq:structuring-element}
  B &\subseteq \D.
\end{align}
The dilation $(\di)$ and erosion $(\er)$ of $f$ by $B$ are then defined as the supremum and infimum over the local neighborhoods in $f$ given by $B$
\begin{align}
  \label{eq:dilation-lattice}
  \di(f;B)(x) &= \bigvee\limits_{\{y\mid (y-x) \in B\}} f(y),\\
  \label{eq:erosion-lattice}
  \er(f;B)(x) &= \bigwedge\limits_{\{y\mid (y-x) \in B\}} f(y).
\end{align}
Opening ($\op$) and closing ($\cl$) are the compositions of dilation and erosion
\begin{align}
  \label{eq:opening}
  \op(f;B)(x) &= \di(\er(f;B);B),\\
  \label{eq:closing}
  \cl(f;B)(x) &= \er(\di(f;B);B).
\end{align}

\subsection{Binary and grayscale morphology}
\label{sec:binary-grayscale}
We define a grayscale image as in \refeq{image} with $\Gamma = [0,1]$. Let $\le$ be the usual ordering of the reals, then the poset $([0,1], \le)$ is a complete lattice, where the $\min$ function gives the infimum and the $\max$ function the supremum. Let $B$ be defined as in \refeq{structuring-element}. Dilation and erosion can then be obtained from \refeq{dilation-lattice} and \refeq{erosion-lattice} as 
\begin{align}
  \label{eq:grayscale-dilation}
  \di(f;B)(x) &= \max\limits_{\{y\mid (y-x) \in B\}} f(y),\\
  \label{eq:grayscale-erosion}
  \er(f;B)(x) &= \min\limits_{\{y\mid (y-x) \in B\}} f(y).
\end{align}
If we restrict $\Gamma$ to $\{0,1\}$ we obtain binary morphology.

\subsection{Morphology on color-coded images}
\label{sec:color-coded}
In~\cite{busch1995morphological} the authors propose a framework for categorical morphology where pixels have a set of categories. Let $C = \{c_1, c_2, \dots, c_n\}$ be a set of $n$ categories. The powerset of $C$, $\ps{C}$, is the set of all subsets of $C$, including the empty set. An image $f$ is then defined as in \refeq{image} with $\Gamma = \ps{C}$.
In this framework the value of a pixel can be any element of $\ps{C}$, e.g $\{c_1\}$, $\{c_1, c_n\}$ or $\{\}$. Let $\subseteq$ be the usual subset relation, then the poset $(\ps{C}, \subseteq)$ is a complete lattice where set intersection is the infimum and set union is the supremum. In~\cite{busch1995morphological} the authors propose to use structuring elements of the same form as $f$, that is $B \in \mathcal{F}$. For the sake of comparison, we first consider the simpler case where $B$ be is defined as in \refeq{structuring-element}. Dilation and erosion can then be obtained from \refeq{dilation-lattice} and \refeq{erosion-lattice} as
\begin{align}
  \label{eq:set-based-dilation}
  \di(f;B)(x) &= \bigcup\limits_{\{y\mid (y-x) \in B\}} f(y),\\
  \label{eq:set-based-erosion}
  \er(f;B)(x)  &= \bigcap\limits_{\{y\mid (y-x) \in B\}} f(y).
\end{align}
An example of these operations is shown \reffig{color-coded-set-op}. 

Let $B \in \mathcal{F}$. Under this scheme, an operation is only performed when one or more categories in the structuring element match a category in the image, and the result depends on the categories in both image and structuring element. Several variations of dilation and erosion are proposed in \cite{busch1995morphological}, here we only consider the ``transparent'' operations.
Let $\D_f$ be the domain of $f$ and $\D_B$ the domain of $B$.  A specified reference point, $y_0 \in \D_B$, is used to determine if $B$ matches $f$ and could for example be the center of a ball shaped $B$. Dilation and erosion are then defined as 
 \begin{align}
   \label{eq:color-coded-dilation}
   \di(f;B)(x) &= f(x) \cup \bigcup\limits_{\{y \in \D_B \mid f(x+y) \cap B(y_0) \ne \emptyset\}} B(y)\\
   \label{eq:color-coded-erosion}
   \er(f;B)(x) &=
                \begin{multipartdef}
                  f(x), & \mycase{f(x) \cap B(y_0) = \emptyset}\\
                  f(x)\setminus B(y_0), & \mycase{[\exists y \in \D_B](f(x+y) \cap B(y_0) = \emptyset)},\\
                  f(x) \cup B(y_0), & \otherwise
                \end{multipartdef}
 \end{align}
An example of these operations is shown \reffig{color-coded} using a cross-shaped structuring element with $y_0$ in the center.

\subsection{Morphology on label images}
\label{sec:label-images}
In~\cite{ronse2005morphology} the authors propose a framework for categorical morphology where pixels have no category ($\bot$), a unique category or conflicting categories ($\top$). Let $C = \{c_1, c_2, \dots, c_n\}$ be a set of $n$ categories and let $C_* = C \cup \{\bot, \top\}$. An image $f$ is then defined as in \refeq{image} with $\Gamma = C_*$.
The poset ($C_*, \le$), where $\le$ satisfies $[\forall c \in C](\bot \le c \le \top)$ is a complete lattice. Let $B$ be defined as in \refeq{structuring-element} and let $V(x) = \{f(x-y) \mid y \in B\}$. Dilation and erosion are then defined as
\begin{align}
  \label{eq:label-dilation}
   \di(f;B)(x) &= \begin{multipartdef}
     \top, & \mycase{\top \in V(x)}\\
     \top, & \mycase{\vert V(x) \cap C\vert  > 1}\\
     V(x) \cap C, & \mycase{\vert V(x) \cap C \vert  = 1}\\
     \bot, & \otherwise 
   \end{multipartdef}\\
   \label{eq:label-erosion}
  \er(f;B)(x) &= \begin{multipartdef}
    \bot, & \mycase{\bot \in V(x)}\\
    \bot, & \mycase{\vert V(x) \cap C\vert  > 1}\\
    V(x) \cap C, & \mycase{\vert V(x) \cap C\vert  = 1}\\
    \top, & \otherwise 
   \end{multipartdef}
\end{align}
An example of these operations is shown \reffig{label-images}. In the context of categorical distributions, where we have detailed information about label uncertainty, this approach is unsuitable due to the loss of information.

\subsection{N-ary morphology}
\label{sec:n-ary}
In~\cite{chevallier2016nary} the authors propose a framework for categorical morphology where pixels have a unique category. Let $C = \{c_1,c_2,\dots,c_n\}$ be a set of $n$ categories. An image $f$ is then defined as in \refeq{image} with $\Gamma = C$. Instead of operating on all categories simultaneously, the authors propose to operate on a single category at a time. Let $B$ be defined as in \refeq{structuring-element} and let $i$ be the category we operate on. We use subscripts to distinguish single category operations from standard operations. Dilation and erosion are then defined as 
\begin{align}
  \label{eq:n-ary-dilation}
  \di_i(f;B)(x) &= \begin{multipartdef}
    f(x), & \mycase{[\forall y \in B](f(x+y) \ne i)}\\
    i,    & \otherwise
  \end{multipartdef}\\
  \label{eq:n-ary-erosion}
  \er_i(f;B)(x) &= \begin{multipartdef}
    f(x),        & \mycase{f(x) \ne i}\\
    i,           & \mycase{[\forall y \in B](f(x+y) = i)}\\
    \theta(x,f), & \otherwise
  \end{multipartdef}
\end{align}
where $\theta$ is a function that assigns a value in the case where there are different categories in the neighborhood of $x$. A natural choice for $\theta$, which is also suggested in \cite{chevallier2016nary}, is to pick the value of the closest pixels. However, this does not help when the closest pixels have different values, which is a fundamental problem when pixel values cannot represent uncertainty. This is solved by ranking the categories and using the ranking to break ties. In general there is no obvious way of ranking categories based on the image alone, and as the number of multi-category interfaces increases it becomes more difficult to understand how one particular ranking influence the outcome.

Without ranking categories a priori, the above definition implies an ordering $\le_i$, which is not a partial order and thus $(C, \le_i)$ is not a complete lattice. In~\cite{vandegronde2017nonscalar} the authors show that $\le_i$ is a preorder, and formalize constraints for choosing $\theta$ such that dilation and erosion form an adjunction and their compositions are an opening and a closing. However, this does not help decide which category to choose when multiple categories are closest, as the constraints on $\theta$ do not yield a unique rule for breaking ties.
An example of these operations is shown in \reffig{n-ary}, where the question marks highlight two pixels that cannot be assigned a value without a method for breaking ties.

\begin{figure}[p]
  \begin{subfigure}{\textwidth}
    \includegraphics[width=0.29\textwidth]{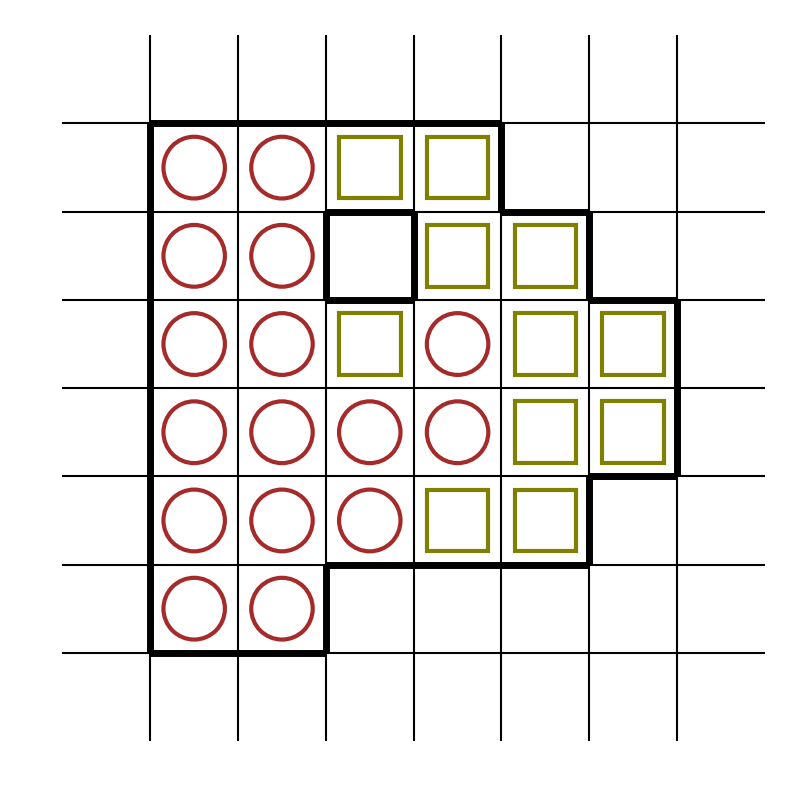}
    \includegraphics[width=0.10\textwidth]{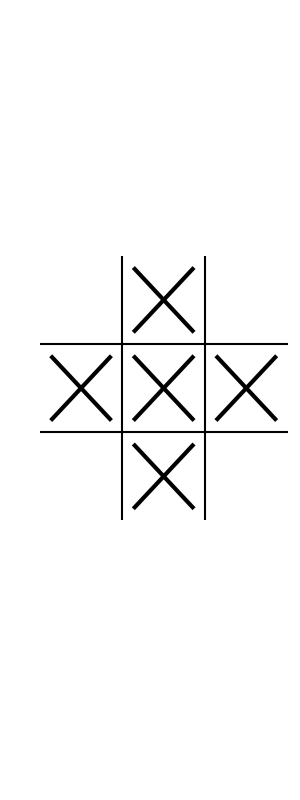}
    \includegraphics[width=0.29\textwidth]{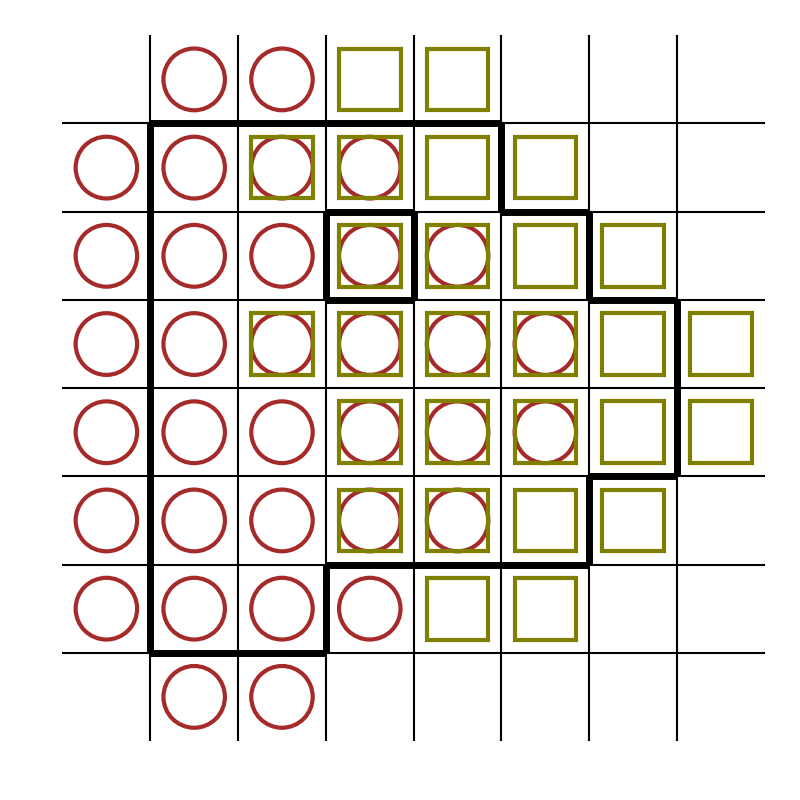}
    \includegraphics[width=0.29\textwidth]{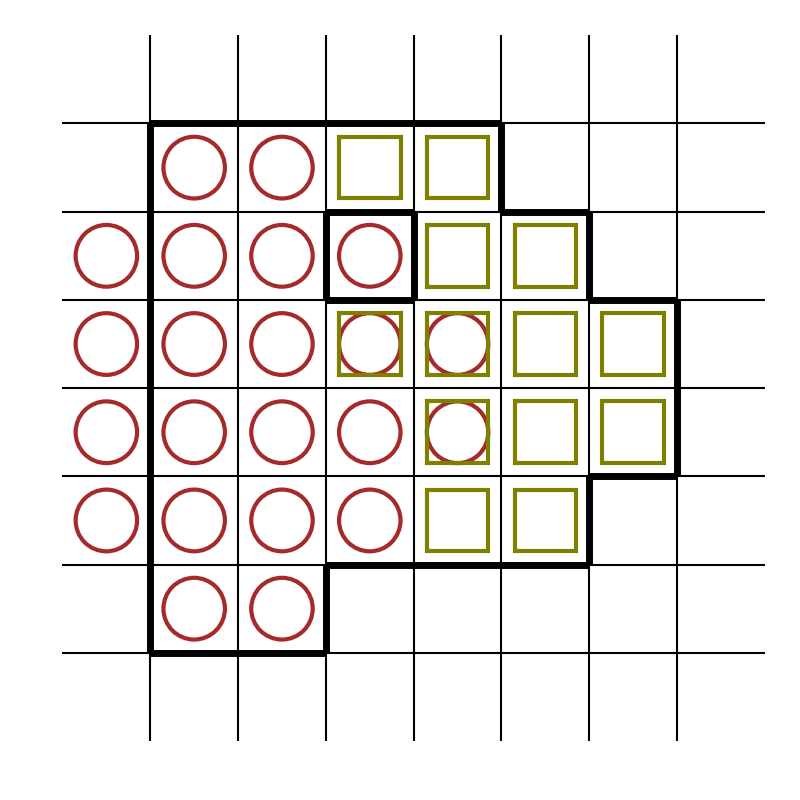}
    \caption{Morphology on color-coded images using \refeq{structuring-element} as structuring element.
     The bold boundary highlights the pixels with one or more categories and is only for illustration.
   }
   \label{fig:color-coded-set-op}
 \end{subfigure}
 
 \begin{subfigure}{\textwidth}
   \includegraphics[width=0.29\textwidth]{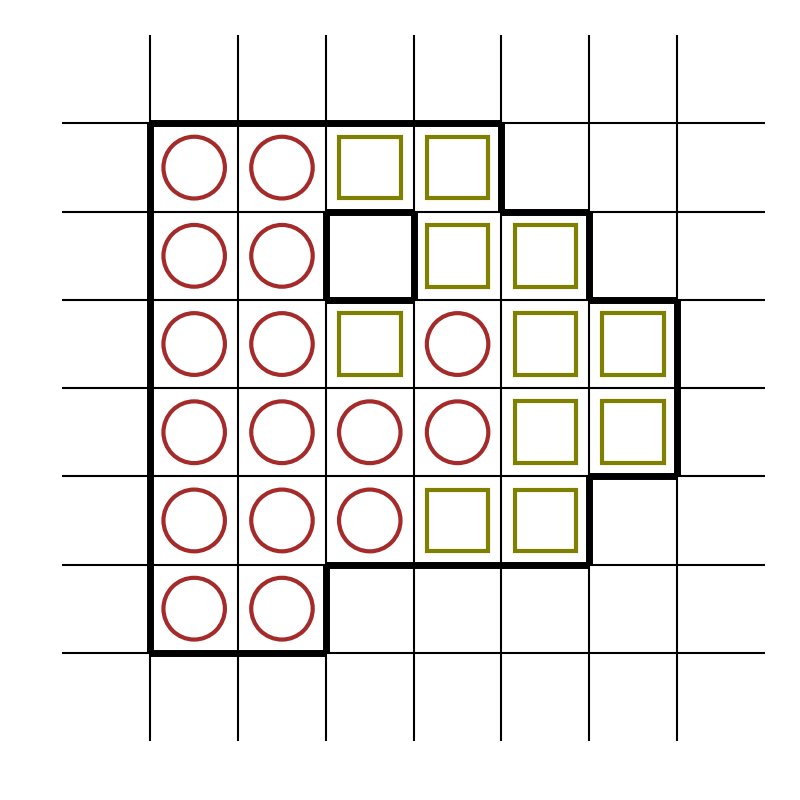}
   \includegraphics[width=0.10\textwidth]{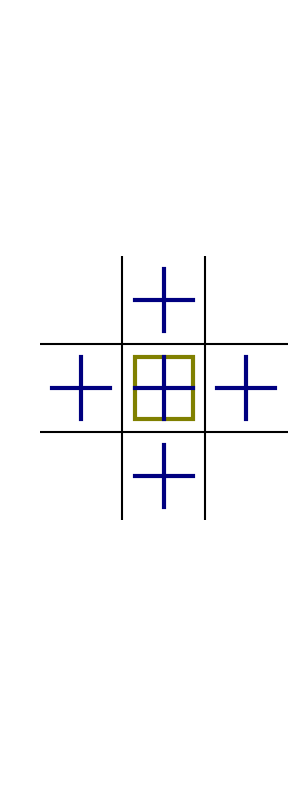}
   \includegraphics[width=0.29\textwidth]{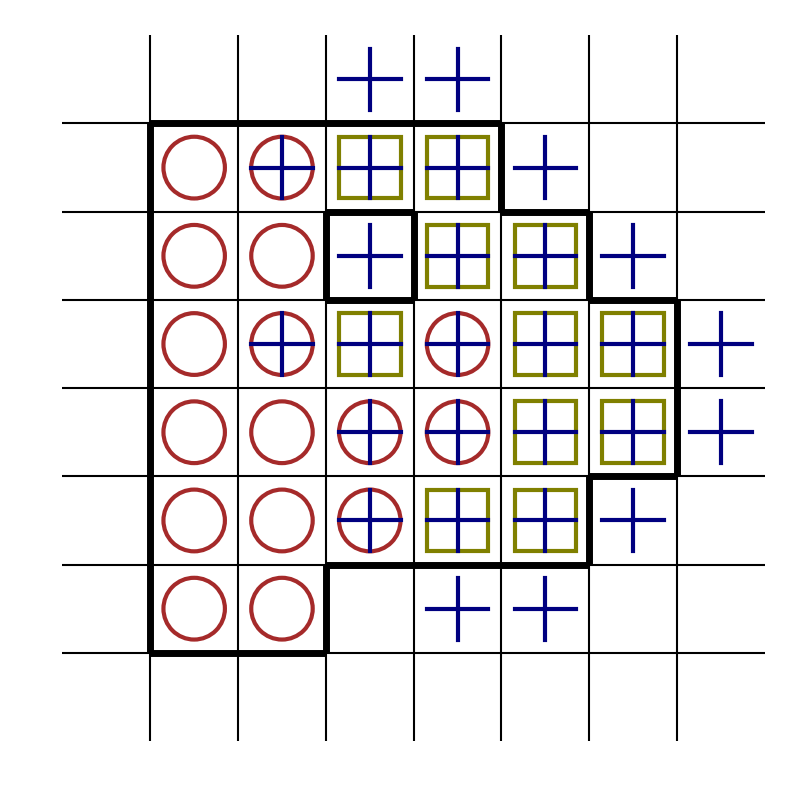}
   \includegraphics[width=0.29\textwidth]{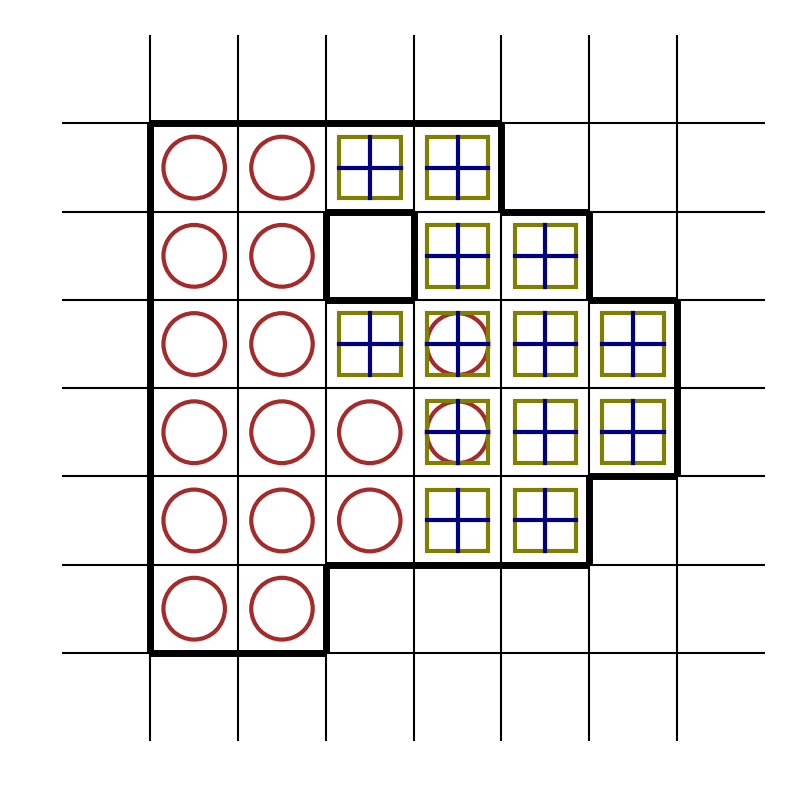}
   \caption{Morphology on color coded images using structuring element from \cite{busch1995morphological} . This figure extends Figure 2. in \cite{busch1995morphological} with the closing operation.
     Notice that $B(y_0) = \{plus, square\}$, meaning that $B$ will match any pixel with either $plus$ or $square$.
   }
   \label{fig:color-coded}
 \end{subfigure}
 
  \begin{subfigure}{\textwidth}
    \includegraphics[width=0.29\textwidth]{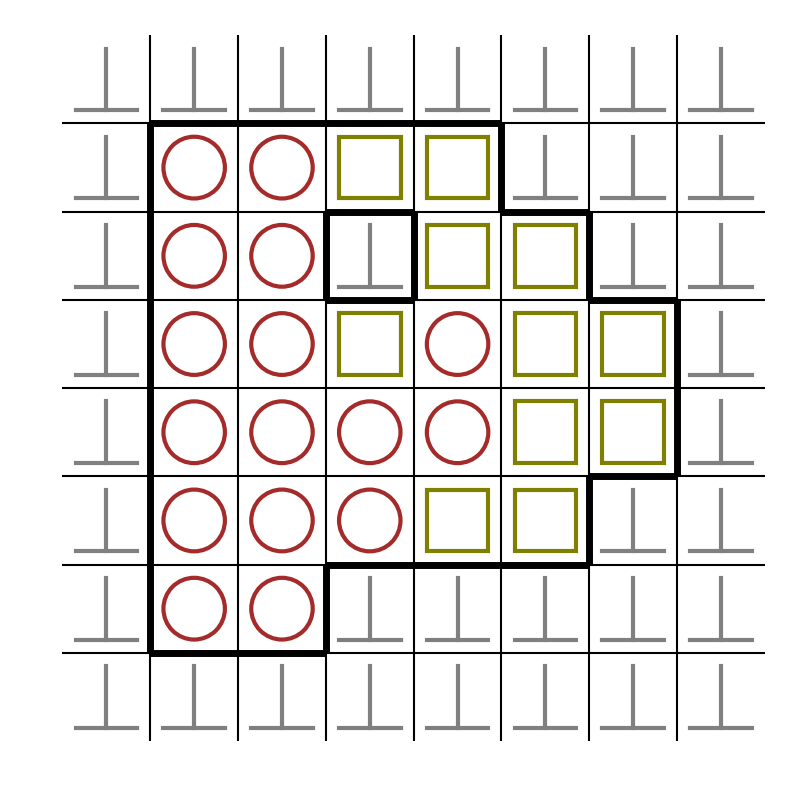}
    \includegraphics[width=0.10\textwidth]{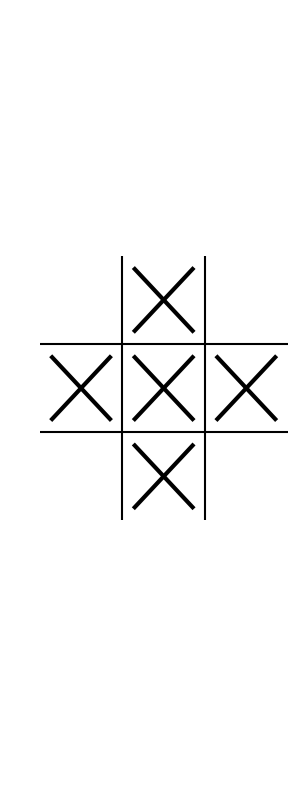}
    \includegraphics[width=0.29\textwidth]{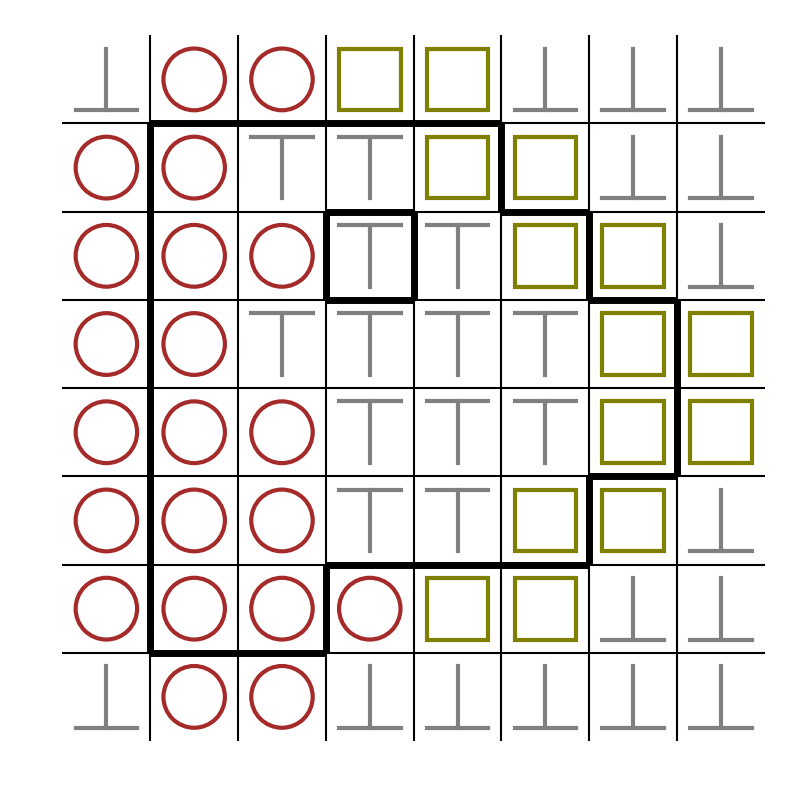}
    \includegraphics[width=0.29\textwidth]{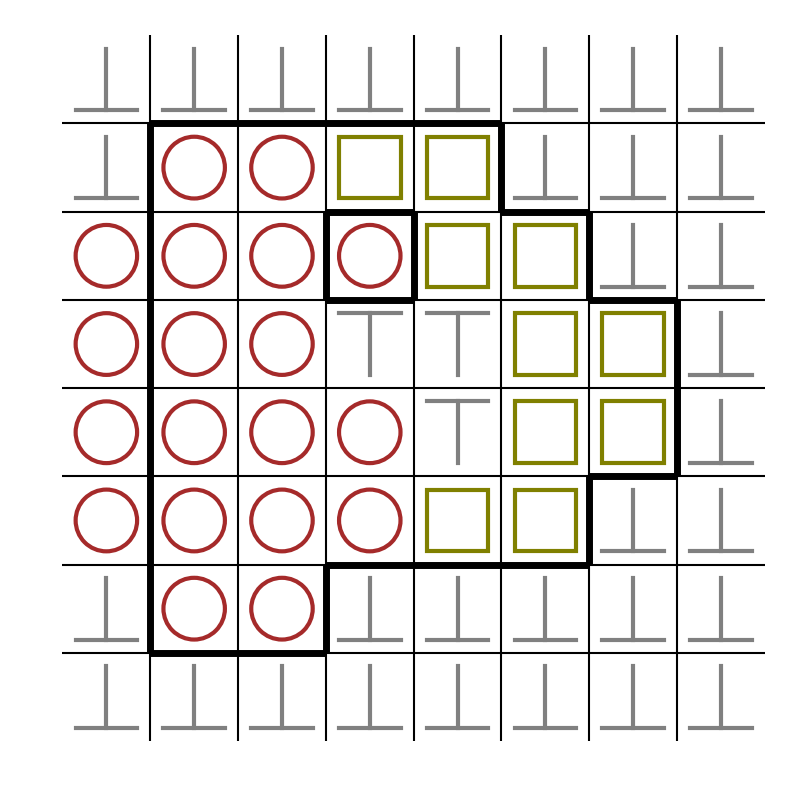}
    \caption{Morphology on label images.}
    \label{fig:label-images}
  \end{subfigure}
 
  \begin{subfigure}{\textwidth}
    \includegraphics[width=0.29\textwidth]{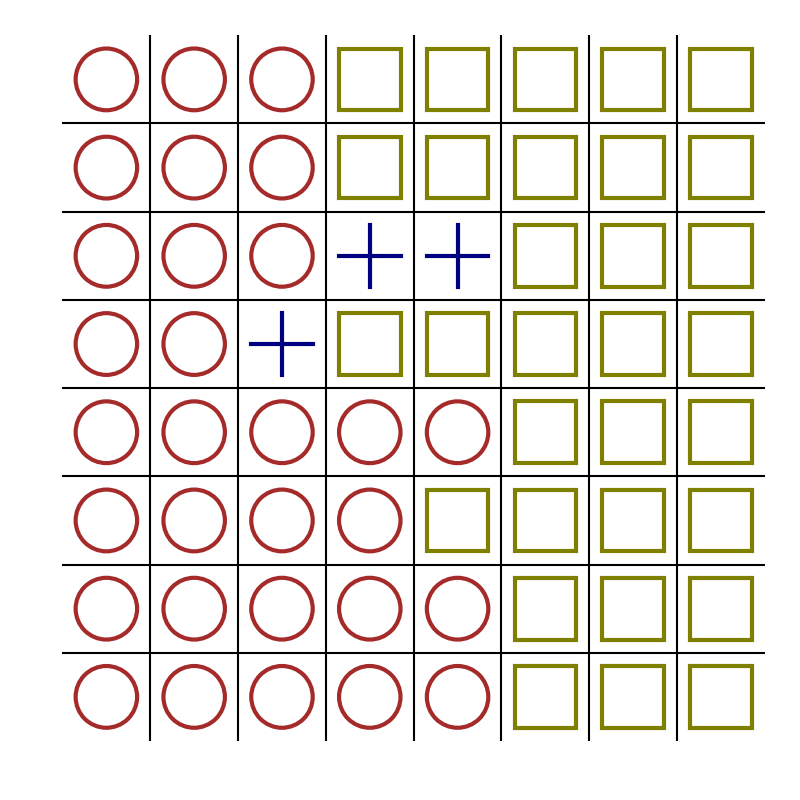}
    \includegraphics[width=0.10\textwidth]{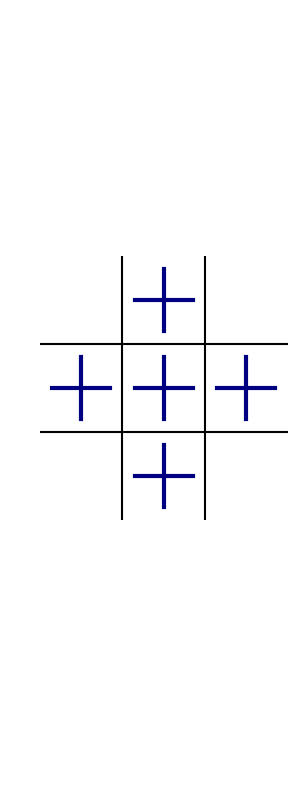}
    \includegraphics[width=0.29\textwidth]{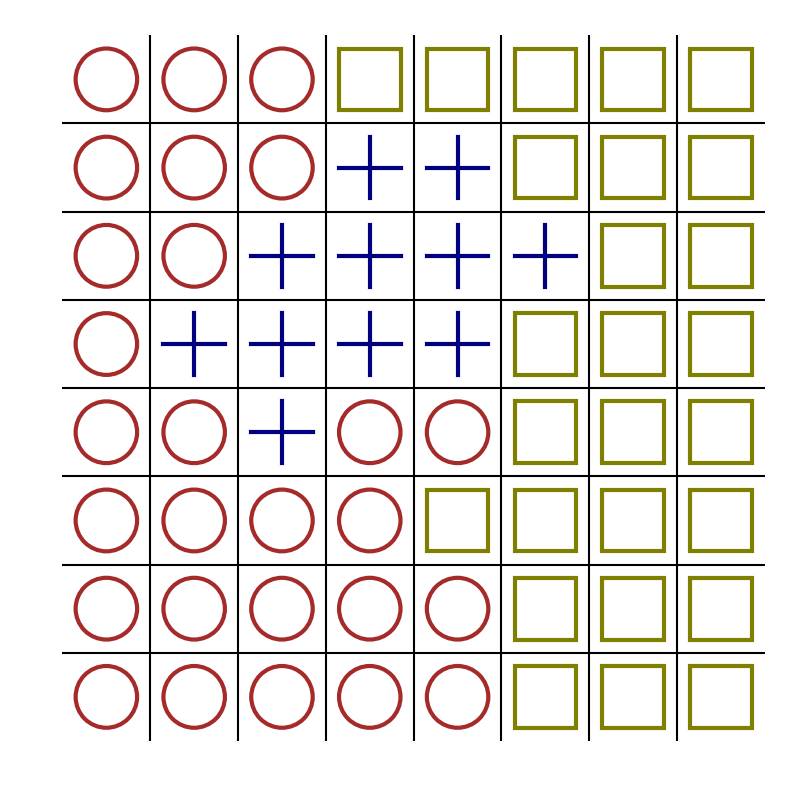}
    \includegraphics[width=0.29\textwidth]{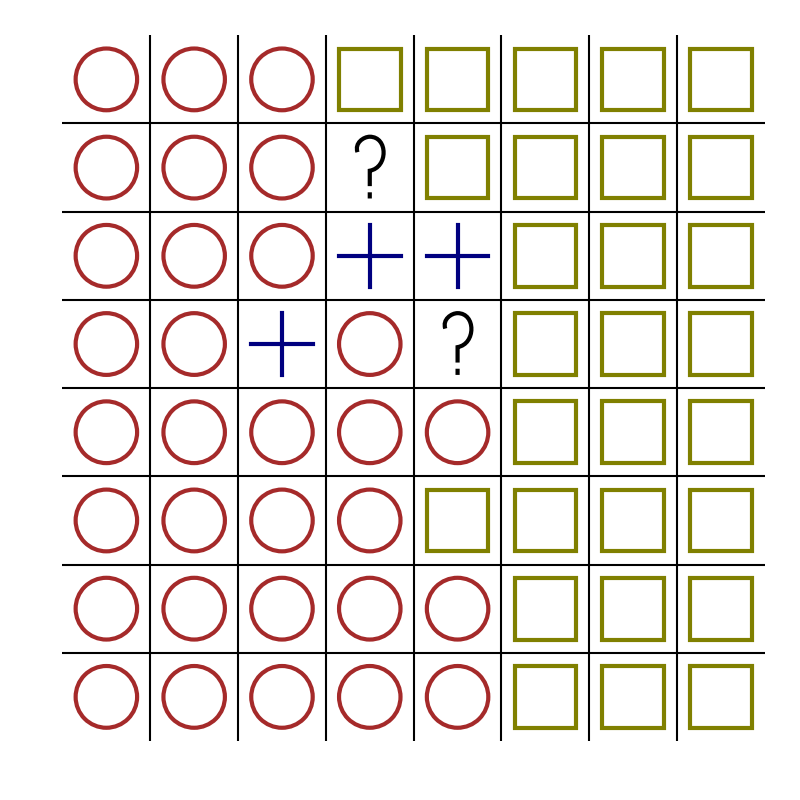}
    \caption{N-ary morphology. The coloring of the structuring element with $plus$ indicates that operations are on the $plus$ category.
      The question mark ? indicate pixels that cannot be assigned a valued without additional ordering of the categories.
      Notice that it is not possible to assign pixels ``no-category'' as in (a), (b) or (c).}
    \label{fig:n-ary}
  \end{subfigure}
  \caption{Comparison of categorical morphologies from the literature. From left to right: $f$, $B$, $\di(f;B)$, $\op(f;B)$.}
  \label{fig:related}
\end{figure}

\subsection{Fuzzy n-ary morphology}
\label{sec:fuzzy-n-ary}
In~\cite{chevallier2016nary} the authors also propose an extension of n-ary morphology to images of categorical distributions. Let $C = \{c_1,c_2,\dots,c_{n+1}\}$ be a set of $n+1$ categories. The categorical distribution of $n+1$ categories is completely determined by a point in the $n$-simplex $\Delta^n = \{ \pi \in \mathbb{R}^{n+1} \mid \pi_k \ge 0, \sum \pi_k = 1\}$, where $\pi_k$ is the probability of $c_k$. An image $f$ is then defined as in \refeq{image} with $\Gamma = \Delta^n$. Operations are again defined on a single category at a time. Let $B_r$ be a closed ball of radius $r$ centered at the origin and let $i$ be the category we operate on. Let $f_k(x) = f(x)_k$ be the probability of observing category $c_k$ in pixel $x$ and let $\omega_k(x) = 1 - f_k(x)$.
Dilation is then defined as
\begin{align}
  \label{eq:fuzzy-n-ary-dilation}
  \di_i(f;B_r)(x)_k =  
  \begin{multipartdef}
    \di(f_k;B_r)(x), & \mycase{k = i},\\
    [1 - \di(f_i;B_r)(x)]\frac{f_k(x)}{\omega_i(x)}, & \mycase{k \ne i}.
  \end{multipartdef}
\end{align}
where $\di(f_i;B_r)(x) = 1 \implies [1 - \di(f_i;B_r)(x)]\frac{f_k(x)}{\omega_i(x)} = 0$.

Two variations on erosion are proposed in \cite{chevallier2016nary}, neither of which we find satisfactory. The first require that we pick a ranking of all categories and does not yield idempotent opening and closing
\begin{align}
  \label{eq:fuzzy-n-ary-erosion-1}
  \er_i(f;B_r)(x)_k &=
  \begin{multipartdef}
    \er(f_k;B_r)(x)                  & \mycase{k = i}\\
    f_k(x) + f_i(x) - \er(f_i;B)(x)  & \mycase k = \min( \arg\min\limits_{j\ne i}(\di(f_j;B)) )\\
    f_k(x)                          & \otherwise
  \end{multipartdef}
\end{align}
The second assumes that the image is restricted to the edges of the simplex (at most two categories are non-zero in any pixel) and opening and closing are again not idempotent
\begin{align}
  \label{eq:fuzzy-n-ary-erosion-2}
  \er_i(f;B_r)(x)_k &=
  \begin{multipartdef}
    \er(f_k;B_r)(x)                                          & \mycase{k = i}\\
    \frac{1 - \er(f_i;B)(x)}{1 - f_i(x)}f_k(x) & \mycase{f_i(x) \le 0.5 \vee \max\limits_{j\ne i}(\di(f_j;B)(x) < 0.5)}\\
    1 - \er(f_i;B)(x)  & \mycase{k = \min(\arg\max\limits_{j\ne i}\di(f_j;B)(x))}\\
    0 & \otherwise
  \end{multipartdef}
\end{align}
We refer the reader to \cite{chevallier2016nary} for the motivation for these formulations and their properties.

\subsection{Fuzzy Pareto morphology}
\label{sec:fuzzy-pareto}
In~\cite{koppen2000pareto} the authors propose fuzzy Pareto morphology for color images. An RGB color image can be seen as a 3-dimensional fuzzy set, where the membership function for each set correspond to the value of each color channel. This can equivalently be seen as point in the half-open unit cube. An image $f$ is then defined as in \refeq{image} with $\Gamma = (0,1]^d$. 
For each $a \in \Gamma$ we can associate a hyper-rectangle defined by the vector from the origin to $a$. Fuzzy Pareto morphology is based on the idea of dominance. For $a,b \in \Gamma$ let $a \cap b = \{\min(a_i,b_i)\}_{i=1\dots d}$ be the intersection of $a$ and $b$. Let $A(a) = \prod_i a_i$ be the area function, yielding the area of the hyperrectangle of $a$. The degree to which $a$ dominates $b$ is then
\begin{equation}
  \label{eq:fuzzy-pareto-dominance}
  \mu_D(a,b) = \frac{A(a \cap b)}{A(b)},
\end{equation}
which measures how much of the hyperrectangle of $b$ is contained in the hyperrectangle of $a$.

\paragraph{}Let $B(x) = \{x+y \mid y \in B\}$, dilation and erosion are then defined as
\begin{align}
  \label{eq:fuzzy-pareto-dilation}
  \di(f;B)(x) &= f\left(\arg\min\limits_{y \in B(x)}\left\{\max\limits_{z \in B(x) \wedge z \ne y}\mu_D(f(z), f(y))\right\}\right),\\
  \label{eq:fuzzy-pareto-erosion}
  \er(f;B)(x) &= f\left(\arg\max\limits_{y \in B(x)}\left\{\min\limits_{z \in B(x) \wedge z \ne y}\mu_D(f(z),f(y))\right\}\right).
\end{align}
Although not directly applicable to categorical distributions, it could easily be extended by either restricting $\Gamma$ to $\{v \in (0,1]^d \mid \sum_i v_i = 1\}$ or by considering it in the context of the Dirichlet distribution. However, \refeq{fuzzy-pareto-dilation} and \refeq{fuzzy-pareto-erosion} are not guaranteed to yield a unique solution, requiring us to come up with an arbitration rule.

\subsection{Morphology on the unit circle}
In~\cite{peters1997mathematical} the authors propose morphology on the unit circle for processing the hue space of color images. The idea is to use structuring elements from the hue space and define an ordering based on the shortest distance along the unit circle between values in the image and values in the structuring element. Although not directly applicable to categorical images, it could be relevant to consider structuring elements that are themselves categorical distributions and base morphology on distance between distributions.

Morphology on the unit circle is also considered in~\cite{hanbury2001morphological} where the authors propose three approaches: using difference operators (e.g. gradient), using grouped data, and using ``labeled openings''. It is the labeled openings that are most relevant in our context. Let $f$ be an image as defined in \refeq{image} with $\Gamma = [0, 2\pi]$. In a labeled opening the unit circle is partitioned into segments $S(\omega) = \{[0,\omega)$, $[\omega, 2\omega)$,$\dots$, $[2\pi-\omega, 2\pi)\}$ and each segment $s \in S(\omega)$ gives rise to a binary image $f(x;s) = f(x) \in s$. A labeled opening is then the union of the binary openings of all segments, $\op_\omega(f) = \cup_{s \in S(\omega)}f(x;s)$, indicating for each pixel if it was opened.
Categorical images have a natural partitioning based on the categories, leading to the set-based morphology in \refsec{color-coded}, where a labeled opening is the pixels that do not change when opened.

\subsection{Morphology on component graphs}
In~\cite{grossiord2019shape} the authors propose a framework for morphology on multi-valued images based on component graphs. Let an image be defined as in \refeq{image} with $\Gamma = \mathbb{R}^d$. The component graph is constructed from the connected components of the level sets of an image. For example, for $d=2$ and $f(x) \in \{0,1\}^2$ the level sets are $\{(0,0), (0,1), (1,0), (1,1)\}$. For each level set we obtain a set of connected components. Each connected component is a node in the component graph and the children of this node are the connected components that are contained in it. In order to construct the component graph it is required that $\Gamma$ allows a minimum, e.g. $\{0\}^d$, such that the graph will be connected. For categorical images this would require that we have a special background category as in \refsec{color-coded} and \refsec{label-images}. Further, it requires that each pixel can have multiple categories, otherwise no component will be nested inside another and the graph will be the root with all connected components as children.

Because the component graph directly exposes the spatial relationship between differently valued regions, it is possible to apply morphological filters, e.g. noise reduction, by pruning some nodes and reconstructing the image from the pruned component graph. Directly pruning the component graph can lead to ambiguity in the reconstruction when a node with two non-comparable parents is removed. The authors propose to solve this by building a component tree of the component graph, prune the tree, then reconstruct the graph from the tree, and the image from the graph. In order to construct the component tree it is necessary to impose a total order on the nodes of the component graph. For example by using a shape measure on the connected components in the component graph.

Because the component graph only captures spatial relationships when connected components overlap for different level sets, some common post-processing operations, such as closing holes in segmentations, are challenging to perform.

\section{Morphology on categorical distributions}
In this section we propose two approaches for morphology on categorical distributions. In \refsec{dirichlet} we show how to operate on all categories simultaneously by operating on Dirichlet distributions. The limitations of this approach will then motivate single category operations that work directly on categorical distributions, which we will define in \refsec{single-category}.

 \subsection{Morphology on Dirichlet distributions}
 \label{sec:dirichlet}
 Let $\mathbb{R}_+$ be the positive real line.  We consider the Dirichlet distribution of order $n \ge 2$ with parameters $\alpha \in \mathbb{R}_+^n$, written as $\Dir(\alpha)$, as a distribution over the $(n-1)$-simplex $\Delta^{n-1} = \{ \pi \in \mathbb{R}^{n} \mid \pi_k \ge 0, \sum \pi_k = 1\}$ with density function

\begin{equation}
  \label{eq:dirichlet-density}
  \mathrm{pdf}(\pi) = \frac{1}{\mathrm{B}(\alpha)}\prod\limits_{k=1}^n \pi_k^{\alpha_k-1}
\end{equation}
where $\mathrm{B}(\cdot)$ is the Beta function. A sample from the Dirichlet distribution of order $n$ can be seen as parameters of the categorical distribution with $n$ categories. Here, we only consider the expectation
\begin{equation}
  \label{eq:dirichlet-projection}
   \mathbb{E}[\mathrm{Dir}(\alpha)_k] = \frac{\alpha_k}{\sum \alpha},
 \end{equation}
 which maps each Dirichlet distribution to a specific categorical distribution. Note that $0 < \alpha_k < \infty$ implies that we can only represent categorical distributions in the open simplex. In practice this is not a problem as we can get arbitrarily close to the boundary of the simplex.
 
 Let $f_k$ be the $k$'th category in $f$.  An image $f$ is defined as in \refeq{image} with $\Gamma = \mathbb{R}_+^n$. If we equip $f$ with the ordering $f \le g \iff [\forall k](f_k(x) \le g_k(x))$ we obtain a complete lattice. Dilation and erosion are then defined as their grayscale counterparts applied to each category independently
\begin{align}
  \di(f;B)(x)_k &= \di(f_k;B),\\
  \er(f;B)(x)_k &= \er(f_k;B).
\end{align}
An example of these operations is provided in \reffig{dirichlet}. Opening and closing are possibly the most interesting operations as they, respectively, decreases and increases uncertainty at the boundaries between overlapping categories.

\begin{figure}[t]
  \centering
  \begin{minipage}[c]{0.09\linewidth}
    $\Dir(\alpha)$
  \end{minipage}
  \begin{minipage}[c]{0.9\linewidth}
    \includegraphics[width=\textwidth,trim=0 20 0 0,clip=true]{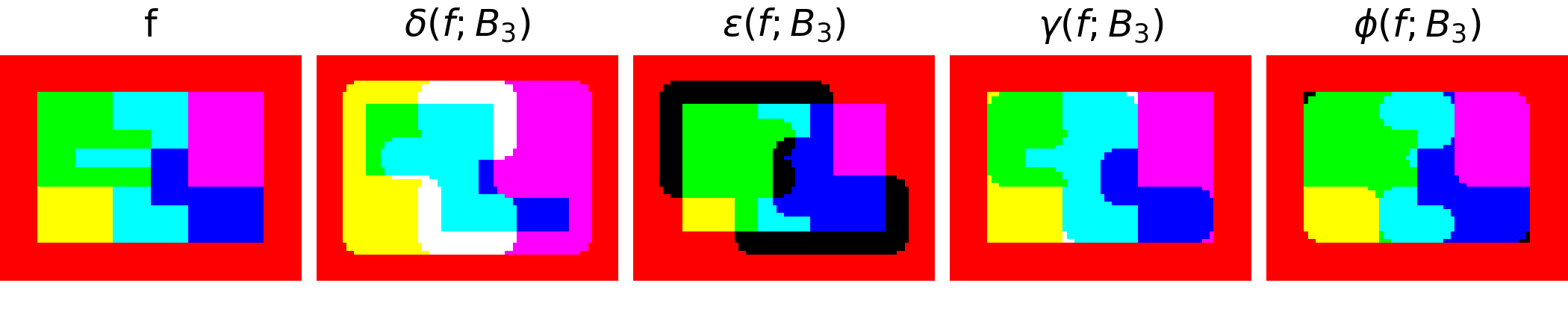}
  \end{minipage}\\
  \begin{minipage}[c]{0.09\linewidth}
    $\E{\Dir}$
  \end{minipage}
  \begin{minipage}[c]{0.9\linewidth}
    \includegraphics[width=\textwidth,trim=0 20 0 20,clip=true]{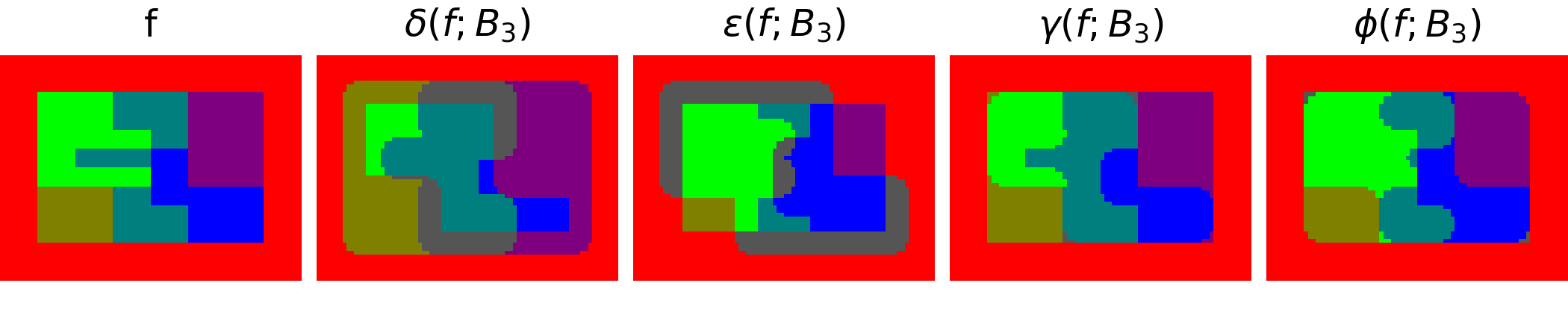}
  \end{minipage}\\
  \begin{minipage}[c]{0.09\linewidth}
    $H(\mathbb{E})$
  \end{minipage}
  \begin{minipage}[c]{0.9\linewidth}
    \includegraphics[width=\textwidth,trim=0 20 0 20,clip=true]{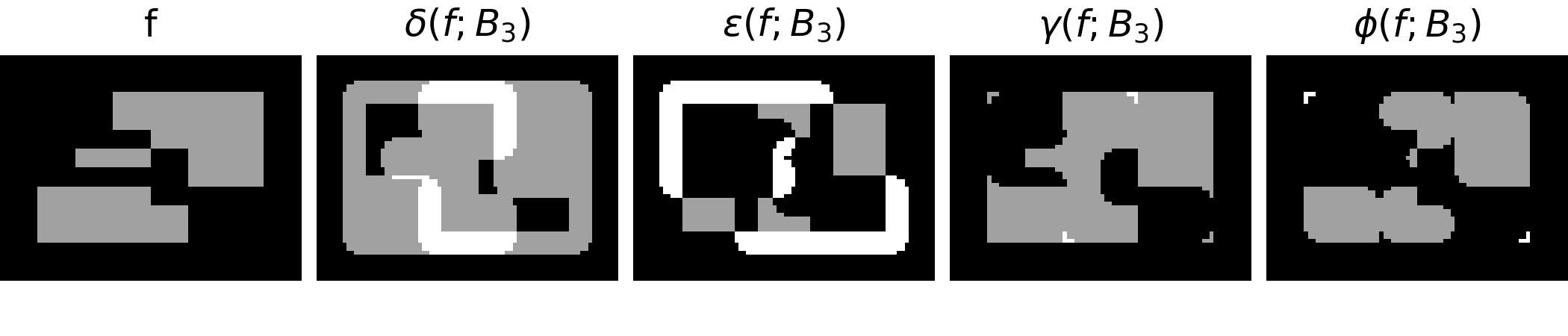}
      \end{minipage}\\
  \begin{minipage}[c]{0.09\linewidth}
    $\|\alpha\|$
  \end{minipage}
  \begin{minipage}[c]{0.9\linewidth}
    \includegraphics[width=\textwidth,trim=0 20 0 20,clip=true]{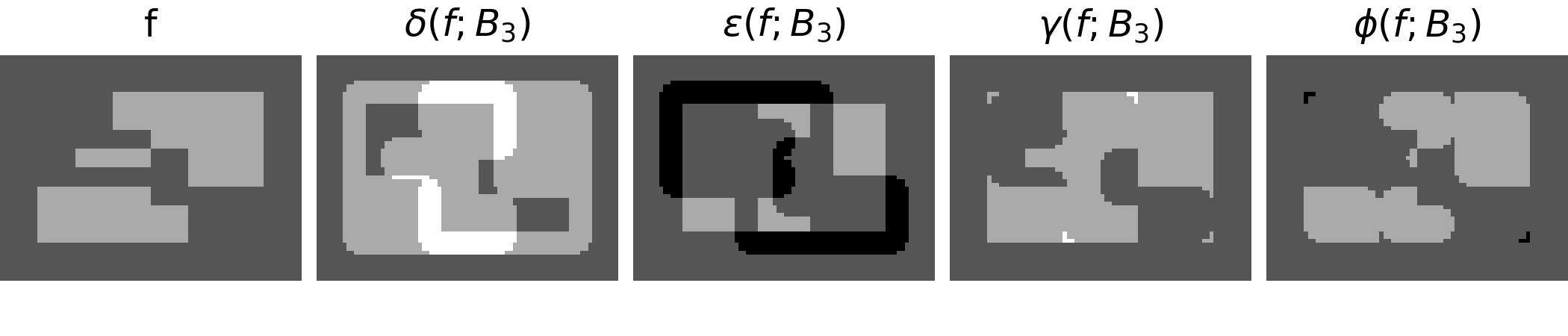}
  \end{minipage}

  \caption[Morphology on Dirichlet distributions]{Morphology on Dirichlet distributions.
    The top-left image is an RGB representation of an image $f$ with three categories, where the colors red, green, and blue correspond to points very close to the vertices of $\Delta^2$ and the remaining colors are mixtures of these three colors.
    The first row is the Dirichlet distribution.
    The second row is the probability vectors obtained from \refeq{dirichlet-projection}.
    The third row is entropy of the probability vectors, and the fourth row is magnitude of the parameter vectors.
    We can see that dilation increases both entropy and magnitude, whereas erosion decreases magnitude and increases or decreases entropy depending on the local distribution.}
  \label{fig:dirichlet}
\end{figure}

We can easily extend these operators to operate on a subset of categories $S$ by simply only updating those categories
\begin{align}
  \di(f;B\vert I)(x)_k &= \begin{multipartdef}
    \di(f_k;B), & \mycase{k \in S}\\
    f_k, & \otherwise
  \end{multipartdef}\\
  \er(f;B\vert I)(x)_k &= \begin{multipartdef}
    \er(f_k;B), & \mycase{k \in S}\\
    f_k, & \otherwise
\end{multipartdef}
\end{align}
An example of these operations is provided in \reffig{dirichlet-subset} where we operate on the green category. Consider the gray/blue region surrounded by green that is indicated with a white ellipse in the left image of the second row. When we dilate the green category we would expect this region to become green in the probability image, but in the Dirichlet space these pixels already have the same green value as the green region, so they are unaffected by the dilation. We could partly solve this by carefully setting the $\alpha$ values, e.g. setting the pixels with only green to have very large green values. However, if our goal is to work on categorical distributions, this becomes too large a burden to be practical and we now turn our attention to morphological operators that work directly on categorical distributions.

\begin{figure}[t]
  \centering
    \begin{minipage}[c]{0.09\linewidth}
    $\Dir(\alpha)$
  \end{minipage}
  \begin{minipage}[c]{0.9\linewidth}
    \includegraphics[width=\textwidth,trim=0 20 0 0,clip=true]{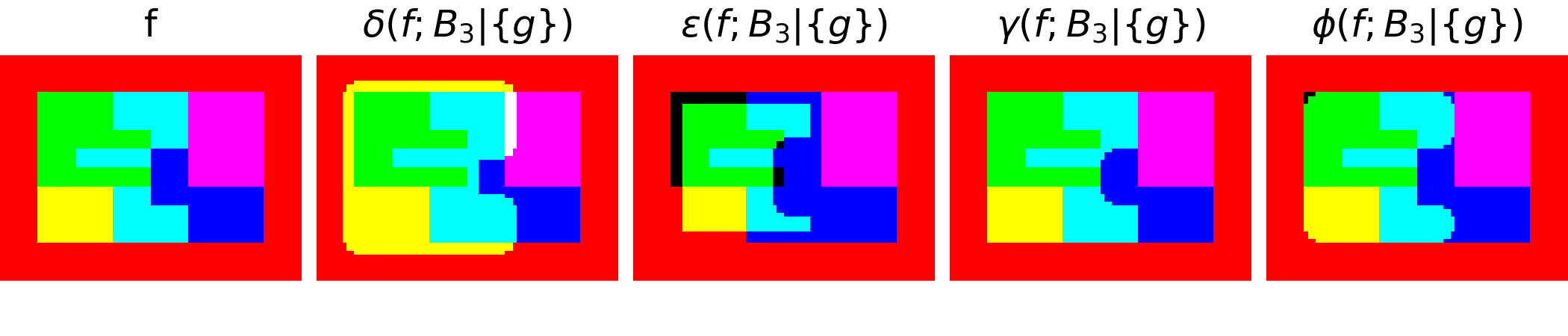}
  \end{minipage}\\
  \begin{minipage}[c]{0.09\linewidth}
    $\E{\Dir}$
  \end{minipage}
  \begin{minipage}[c]{0.9\linewidth}
    \includegraphics[width=\textwidth,trim=0 20 0 20,clip=true]{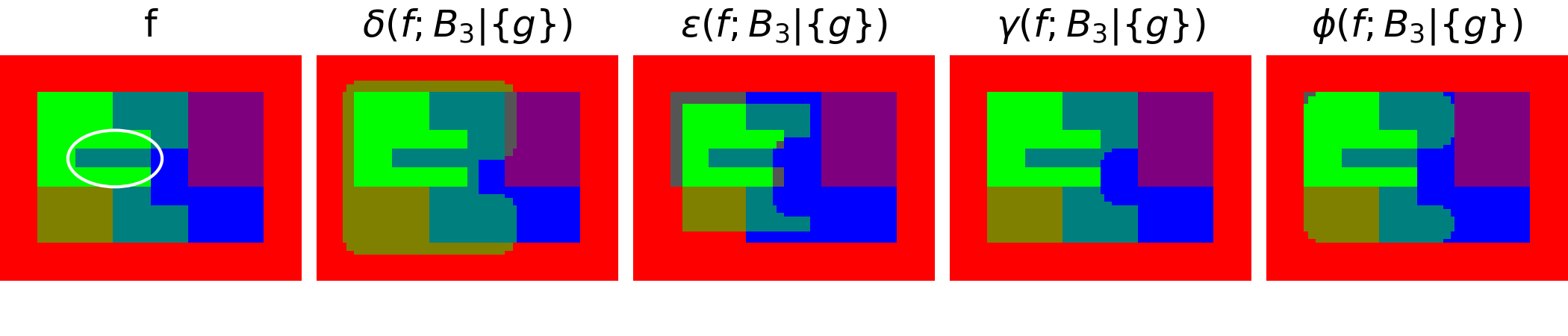}
  \end{minipage}\\
  \begin{minipage}[c]{0.09\linewidth}
    $H(\mathbb{E})$
  \end{minipage}
  \begin{minipage}[c]{0.9\linewidth}
    \includegraphics[width=\textwidth,trim=0 20 0 20,clip=true]{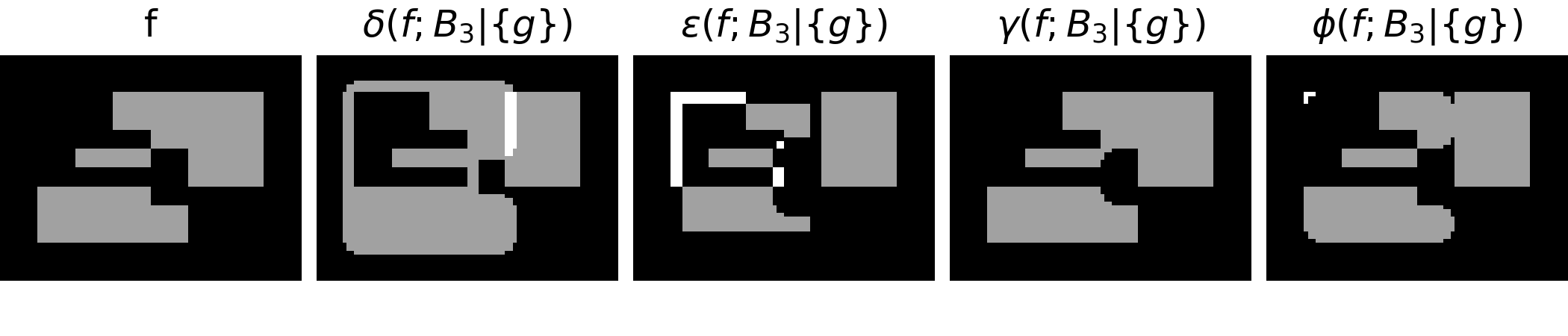}
      \end{minipage}\\
  \begin{minipage}[c]{0.09\linewidth}
    $\|\alpha\|$
  \end{minipage}
  \begin{minipage}[c]{0.9\linewidth}
    \includegraphics[width=\textwidth,trim=0 20 0 20,clip=true]{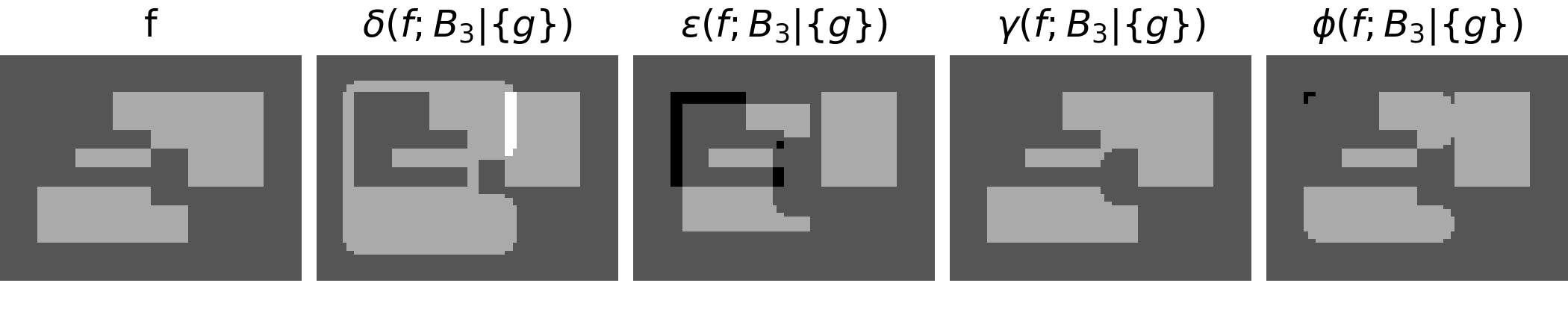}
  \end{minipage}
  
    \caption[Morphology on subset of Dirichlet distribution]{Morphology on Dirichlet distributions using a subset of categories, in this case the green category $\{g\}$. See also \reffig{dirichlet}.}
  \label{fig:dirichlet-subset}
\end{figure}

\subsection{Morphology on categorical distributions}
\label{sec:single-category}
Recall from \refsec{fuzzy-n-ary} that for a set of $n+1$ categories, $C = \{c_1,c_2,\dots,c_{n+1}\}$, the categorical distribution over these categories is completely determined by a point in the $n$-simplex $\Delta^n = \{ \pi \in \mathbb{R}^{n+1} \mid \pi_k \ge 0, \sum \pi_k = 1\}$, where $\pi_k$ is the probability of $c_k$. An image $f$ is then defined as in \refeq{image} with $\Gamma = \Delta^n$. Operations are again defined on a single category at a time. Let $B_r$ be a closed ball of radius $r$ centered at the origin and let $i$ be the category we operate on. Let $f_k(x) = f(x)_k$ be the probability of observing category $c_k$ in pixel $x$ and let $\omega_k(x) = 1 - f_k(x)$.

\subsubsection{Dilation}
\label{sec:dilation}
For the dilated category $i$ the operation is the same as standard grayscale dilation. For the remaining set of categories the operation is a rescaling to ensure that the probabilities sum to one, while the conditional probabilities $\mathrm{Pr}(k \vert x, k \ne i)$ are unchanged
\begin{equation}
  \label{eq:dilation}
  \di_i(f;B_r)(x)_k =  
  \begin{multipartdef}
    \di(f_k;B_r)(x), & \mycase{k = i},\\
    [1 - \di(f_i;B_r)(x)]\frac{f_k(x)}{\omega_i(x)}, & \mycase{k \ne i}.
  \end{multipartdef}
\end{equation}
If $\di(f_i;B_r) = 1$ then the conditional probabilities are not defined and we simply set the probabilities to $1 - \di(f_i;B_r) = 0$. This definition is the same as \refeq{fuzzy-n-ary-dilation} and equivalent to the definition from \cite{chevallier2016nary}.

\subsubsection{Erosion}
\label{sec:erosion}
Erosion is defined similarly to dilation, with the exception of the case when $f_i(x) = 1$ where we cannot rescale the remaining categories because $\omega_i(x) = 0$
\begin{equation}
  \label{eq:erosion}
  \er_i(f;B_r)(x)_k =
  \begin{multipartdef}
    \er(f_k;B_r)(x)                                          & \mycase{k = i}\\
    [1 - \er(f_i;B_r)(x)] \frac{f_k(x)}{\omega_i(x)} & \mycase{k \ne i \wedge f_i(x) < 1}\\
    [1 - \er(f_i;B_r)(x)] \frac{\theta(f_k,B_r)(x)}{\sum_{j \ne i}\theta(f_j,B_r)(x)} & \mycase{k \ne i \wedge f_i(x) = 1}
  \end{multipartdef}
\end{equation}
The function $\theta$ must only depend on the neighborhood defined by $B_r$ and defined such that $\er(f_i;B_r)(x) < 1 \implies [\exists k \ne i]\left(\theta(f_k,B_r)(x) > 0\right)$. In addition we require that, when disregarding discretization issues, eroding with $B_{r+\rho}$ is equivalent to first eroding with $B_{r}$ then eroding with $B_\rho$
\begin{equation}
\er_i(\er_i(f,B_r),B_{\rho})(x) = \er_i(f, B_{r+\rho})(x) \label{eq:erosion-req}.
\end{equation}
Since $\theta$ is only used in the case where $f_i(x) = 1$ we must have that 
\begin{equation}
  \er(f_i;B_r)(x) < 1 \implies 
  \frac{\theta(f_k,B_{r+\rho})(x)}{\sum_{j \ne i} \theta(f_j;B_{r+\rho})(x)} = \frac{\theta(f_k;B_r)(x)}{\sum_{j \ne i} \theta(f_j;B_r)(x)}
\end{equation}
So $\theta$ must only depend on the smallest possible neighborhood $B_{r^*}$ where \\$\er(f_i; B_{r^*}) < 1$, leading to
\begin{align}
  \label{eq:theta}
  \theta(f_k,B_r)(x) &= \di(f_k;B_{r^*})(x)\\
  r^* &= \arg\min\limits_{r' > 0} r',\;  \mathrm{s.t.} \; \er(f_i; B_{r'})(x) < 1. \notag
\end{align}
This amounts to picking the closest category as suggested for crisp categorical images in \cite{chevallier2016nary,vandegronde2017nonscalar}, although without the need for breaking ties since multiple closest categories are now handled by rescaling. In \refapp{proofs} we show that these definitions have the same properties as the definitions in \cite{vandegronde2017nonscalar} for operating on n-ary images.

An example of the proposed operations is provided in \reffig{categorical}, where we operate on the green category. Compared to morphology on Dirichlet distributions using subsets in \reffig{dirichlet-subset} the operations now work directly on the probabilities, making it much easier to understand and control.

\begin{figure}[t]
  \centering
  \includegraphics[width=\textwidth,trim=0 20 0 0,clip=true]{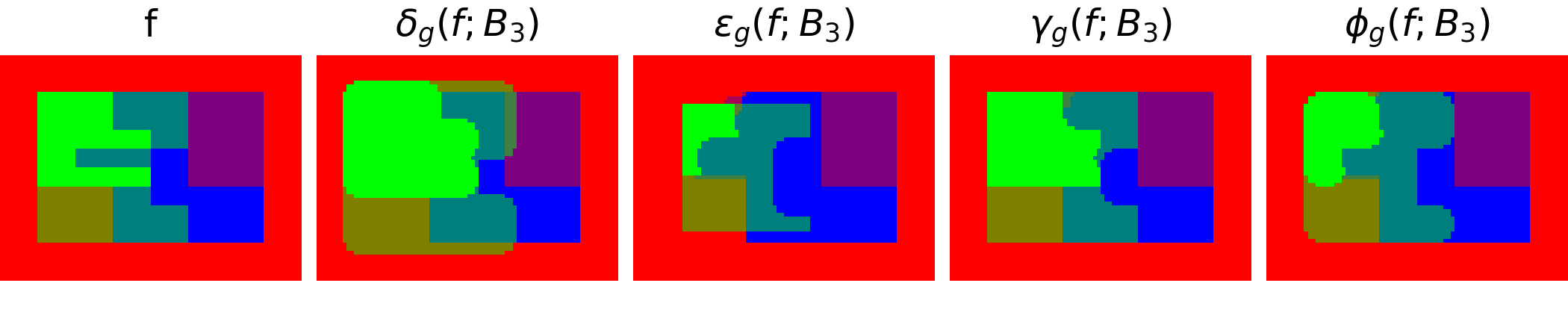}
  \caption{Morphology on categorical distributions. Here we operate on the green category $g$.}
  \label{fig:categorical}
\end{figure}

\section{Protected morphological operations}
\label{sec:protected}
In~\cite{busch1995morphological} the authors introduce the concept of protected morphological operations, where a subset of categories are protected from being updated. Here we adapt the idea of protected morphological operations to categorical distributions and define protected dilation and erosion.

Let $L$ be a set of categories, we then write $\er_i(f;B_r\vert L)$ for an erosion of $i$ that protects $L$. Let $J = C\setminus (\{i\} \cup L)$ be the set of categories that are not protected nor operated on.
Let $f_K(x) = \sum_{k \in K} f_k(x)$ be the sum over a set of categories $K \subset C$. If $L$ is empty, or $[\forall x](f_L(x) = 0)$, protected operations reduce to their non-protected counterparts. 
Because $L$ can change the topology of the domain, we cannot just define operations based on Euclidean distance.
Instead we introduce a distance function $d_\Omega(x,y)$, which computes the distance from $x$ to $y$ on the domain $\Omega$. If $\Omega = \mathbb{Z}^d$, then $d_\Omega(x,y)$ is the Euclidean distance. Computing exact Euclidean distance on a Euclidean domain with holes is non-trivial. Here we use the simplified fast marching method (FMM) from \cite{jones20063d} with the update rule defined in \cite{rickett1999second}, which results in a small approximation error.
For brevity, when possible we leave out function application and write $f$ instead of $f(x)$ in the following.

\begin{figure}[t]
  \centering
  \includegraphics[width=\textwidth,trim=0 20 0 0,clip=true]{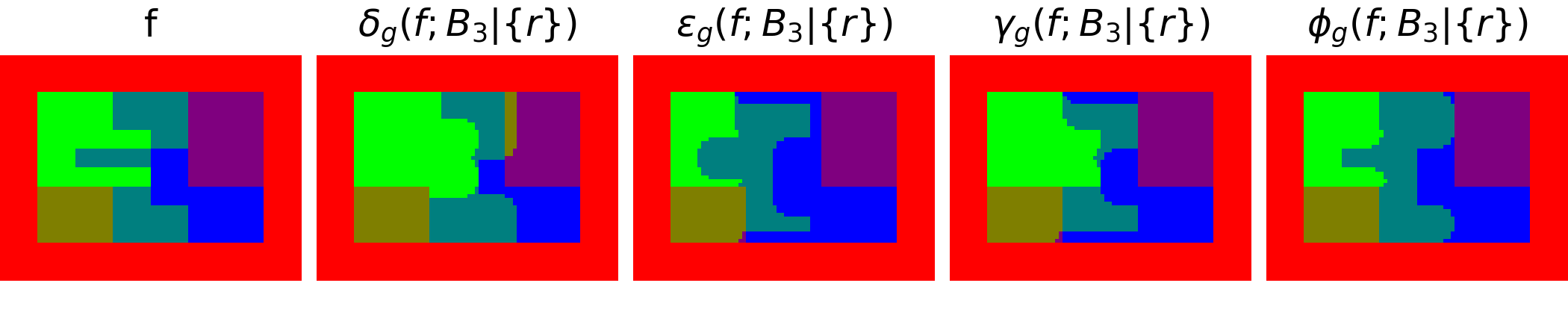}
  \caption{Protected morphology on categorical distributions. The red category $\{r\}$ is protected while we operate on the green category $g$.}
  \label{fig:categorical-protected}
\end{figure}

\subsubsection{Protected dilation}
Let $\Omega_p = \{x \in \D \mid f_L(x) \le 1-p\}$, this is the part of $f$ where it is possible to set $f_i = p$. Protected dilation is then defined as 
\begin{align}
  \label{eq:protected-dilation}
  &\di_i(f;B_r\vert L)(x)_k =  \notag\\
  &\begin{multipartdef}
    f_k                                                  & \mycase{k \in L}\\
    \min\left(1-f_L, \max\limits_{p \in (0,1]}\max\{f_i(y) \mid d_{\Omega_p}(x,y) \le r\}\right)         & \mycase{k = i}\\
    \left[1 - f_L - \di_i(f;B_r\vert L)_i\right]\frac{f_k}{f_J} & \otherwise
  \end{multipartdef}
\end{align}

\subsubsection{Protected erosion}
Protected erosion  is defined similarly to protected dilation, with the added complication of normalization 
\begin{align}
  &\er_i(f;B_r\vert L)(x)_k = \notag\\
  \label{eq:protected-erosion}
  &\begin{multipartdef}
      f_k                                 & \mycase{k \in L}\\
      f_k                                 & \mycase{\max\limits_{p \in (0,1]}\max\{f_J(y) \mid d_{\Omega_p}(x,y) \le r\}} = 0\\
      \min\limits_{p \in (0,1]}\min\{f_i(y) \mid d_{\Omega_p}(x,y) \le r\}     & \mycase{k = i} \\ 
    [1 - f_L -\er_i(f;B_r\vert L)_i]\frac{f_k}{f_J}                                 & \mycase{k \in J \wedge f_J > 0}\\ 
    [1 - f_L -\er_i(f;B_r\vert L)_i]\frac{\theta(f_k)}{\sum_{j\in J}\theta(f_j)}  & \mycase{k \in J \wedge f_J = 0}
  \end{multipartdef}
\end{align}
The first case ensures that all protected categories are unchanged. The second case ensures that a pixel $x$ is not updated, unless there is a path, not blocked by $f_L$, to a pixel $y$ with $f_J(y) > 0$. The importance of this is easily seen by considering the case where $f_i$ varies in an region, but $f_i + f_L = 1$ in the region. The third case states that if there is such a path, then it can be eroded. The fourth and fifth cases handles normalization.
The $\theta$ function is defined in a similar manner as for non-protected erosion in \refeq{theta},
\begin{align}
  \label{eq:protected-theta}
  \theta(f_k)(x) &= \max_{p \in (0,1]}\max\{f_k(y) \mid d_{\Omega_p}(x,y) \le r^*\}\\
  r^* &= \arg\min\limits_{r' > 0} r' \;,\;  \mathrm{s.t.} \; [1 - f_L - \er_i(f;r'\vert L)_i(x)] > 0. \notag
\end{align}
An example of these operations is provided in \reffig{categorical-protected}, where the red category is protected while we operate on the green category. Compared to the non-protected operations in \reffig{categorical} we can see that changes are restricted to the green and blue categories.

\section{Examples}
The first example illustrates how morphology on categorical distributions (\refsec{single-category}) can be used to remove noisy predictions. The second example illustrates how protected morphology on categorical distributions (\refsec{protected}) can be used to model annotator bias.

\subsection{Removing noisy predictions}
Despite the impressive performance of neural networks for segmentation, the results are rarely perfect. \reffig{noisy-predictions} shows part of an electron microscopy image of the hippocampus, along with multi-class predictions and segmentations obtained from \cite{stephensen2020measuring}.  Notice the noisy mitochondria predictions resulting in misclassifications highlighted in \reffig{noisy-predictions-segmentation}.
We can remove these misclassification by opening the mitochondria class before the final classification. \reffig{noisy-predictions-fixed} shows the opened predictions along with the final classifications.
Notice in particular how the errors in circle 2 in \reffig{noisy-predictions-fixed-segmentation} are fixed, such that the vesicle (teal) and the endoplasmic reticulum (yellow) are separated by cytosol. This would have been very difficult to achieve by working directly on the final segmentations. That the vesicle and endoplasmic reticulum are probably misclassified just illustrates that not all things should be fixed in post-processing.

\begin{figure}[t]
  \centering
  \begin{subfigure}[t]{0.32\textwidth}
    \includegraphics[width=\textwidth]{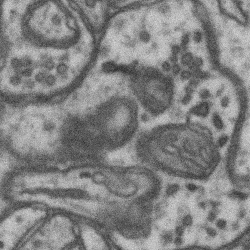}
    \caption{EM image}
  \end{subfigure}
  \begin{subfigure}[t]{0.32\textwidth}
    \includegraphics[width=\textwidth]{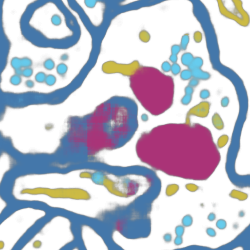}
    \caption{Prediction}
  \end{subfigure}
  \begin{subfigure}[t]{0.32\textwidth}
    \includegraphics[width=\textwidth]{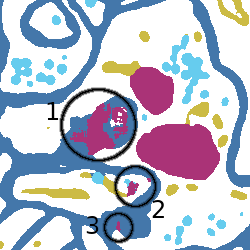}
    \caption{Segmentation}
    \label{fig:noisy-predictions-segmentation}
  \end{subfigure}
  \caption[Noisy mitochondria predictions]{Electron microscopy image of the hippocampus with predictions of five classes: cytosol (white), membrane (blue), mitochondria (purple), endoplasmic reticulum (yellow), and vesicle (teal). By examining neighboring slices, the areas 1-3 have been confirmed to wrongly contain mitochondria predictions.}
  \label{fig:noisy-predictions}
\end{figure}

\begin{figure}[t]
  \centering
    \begin{subfigure}[t]{0.32\textwidth}
  \includegraphics[width=\textwidth]{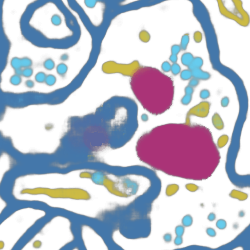}
    \caption{Mitochondria opened}
  \end{subfigure}
  \begin{subfigure}[t]{0.32\textwidth}
    \includegraphics[width=\textwidth]{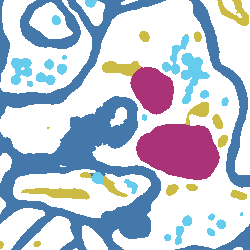}
    \caption{Segmentation of (a)}
  \end{subfigure}
  \begin{subfigure}[t]{0.32\textwidth}
    \includegraphics[width=\textwidth]{remove-noisy-predictions/246_0-1536_0-2048_cutout_segmentation_errors-highlighted.png}
    \caption{Original segmentation}
    \label{fig:noisy-predictions-fixed-segmentation}
  \end{subfigure}
  \caption[Fixed mitochondria predictions]{Fixing mitochondria misclassifications by opening the mitochondria predictions with $B_{12}$. }
  \label{fig:noisy-predictions-fixed}
\end{figure}

\subsection{Modeling annotator bias}
Expert annotation is the gold standard in most clinical practice as well as for evaluating computer methods. However, annotation tasks are inherently subjective and prone to substantial inter-rater variation \cite{joskowicz2019inter,becker2019variability}. When investigating the influence of this variation on statistics and decisions it can be interesting to consider specific hypotheses regarding the variation. Consider the brain tumor annotation in \reffig{braintumor}. The annotation is derived from the QUBIQ\footnote{\url{https://qubiq.grand-challenge.org/}} challenge brain tumor dataset, where three annotators each annotated whole tumor, tumor core and active tumor. From this we obtain an image with four categories: background, edema, active core, inactive core. Although the annotators have a high level of agreement, there is still substantial variation in the extent of edema and in how much of the tumor core is active.

\begin{figure}[t]
  \centering
  \includegraphics[width=0.49\textwidth]{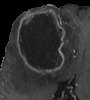}
  \includegraphics[width=0.49\textwidth]{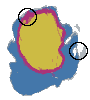}
  \caption[Brain tumor annotation]{Inter-rater variation in annotation of brain tumors. White is background, blue edema, yellow inactive core and purple active core. Variation is indicated by color mixing. The black circles highlights two regions with large variation. }
  \label{fig:braintumor}
\end{figure}

Using protected dilation we can for example hypothesize how the merged annotation would appear under the assumption that the tumor core is oversegmented but the active part is undersegmented. \reffig{active-core} shows the results where we first dilate the active core while protecting edema and background, then dilate edema while protecting background. This would allow us to easily investigate if statistical differences in a case-control study could be explained by biased annotations.

\begin{figure}[t]
  \centering
  \begin{subfigure}[t]{0.24\textwidth}
    \includegraphics[width=\textwidth]{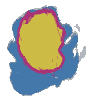}
    \caption{Original}
  \end{subfigure}
  \begin{subfigure}[t]{0.24\textwidth}
    \includegraphics[width=\textwidth]{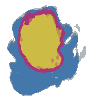}
    \caption{$B_1$}
  \end{subfigure}
  \begin{subfigure}[t]{0.24\textwidth}
    \includegraphics[width=\textwidth]{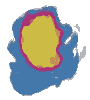}
    \caption{$B_2$}
  \end{subfigure}
  \begin{subfigure}[t]{0.24\textwidth}
    \includegraphics[width=\textwidth]{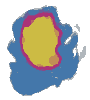}
    \caption{$B_3$}
  \end{subfigure}
  \caption{{\it What could the annotation look like if the core was oversegmented, but the active part undersegmented?} Dilation of active core while protecting edema and background, followed by dilation of edema while protecting background using $B_1, B_2, B_3$.}
  \label{fig:active-core}
\end{figure}

\section{Discussion \& Conclusion}
We have provided a thorough review of morphology on categorically valued images. Based on this we have defined morphology on Dirichlet distributions and morphology on categorical distributions. Inspired by \cite{busch1995morphological} we have further defined protected morphology on categorical distributions. We have demonstrated the behavior of the proposed operations and shown how they can be used in real-world applications such as noise removal in multi-class predictions and modeling annotator bias.

The definition of dilation is straightforward and no obvious alternatives present themselves. This is not so for erosion. In our definition, erosion corresponds to conditioning on a change in probability of the eroded category. An equally valid approach would be to also condition on where this change came from. Instead of simply rescaling the categories with non-zero mass we could include information from the neighborhood. For example, when eroding $i$ we would fill the difference $f_i(x) - \er(f_i;B_r)(x)$ based on the pixels that contribute to the difference, that is, those with minimum mass for $i$.
This would result in smoother boundaries, which could be a better representation of uncertainty. A downside is that categories can leak into each other, leading to undesirable results. 

In this work we have focused on the basic morphological operations, dilation and erosion, and their compositions, closing and opening. A logical next step is to investigate more complex morphological operations, such as the morphological gradient, which may be used to investigate spatial relationship between categories by measuring the change in one category as a function of change in another category.

We have defined protected versions of dilation and erosion. From these we could define opening and closing in the standard way. Alternatively, by changing which categories are protected for dilation and erosion we get more control over how a category is opened or closed. In~\cite{busch1995morphological} the authors explore similar ideas for so called ``tunneling'' and ``bridging'' operations on their set-based morphology, which would be interesting to consider in the context of categorical distributions.

Our aim in this work was to bring morphological operations to probabilistic representations of categorical images. These representations can be considered as generative processes that can be sampled. Naive sampling will result in noisy and unrealistic samples. Combining the sampling process with the proposed morphological operations could be an easy approach to obtain smoother and more realistic samples.

In summary, morphology is an indispensable tool for post-processing segmentations. Extending morphology to categorical images and their probabilistic counterparts presents a particular problem since there is in no inherent ordering of categories. In this paper, we have proposed to view categorical images as images of categorical distributions and defined morphological operations that are consistent with this view.

\bibliography{bibliography}

\begin{appendices}
\section{Proofs for \refsec{single-category}}
\label{app:proofs}
We closely follow Section~5.2 in \cite{vandegronde2017nonscalar} by defining a preorder $\le_i$ on $\mathcal{F}$ and showing that our definitions of dilation and erosion form an adjunction in this preorder. We then show that their compositions are an opening ($\op_i = \di_i\er_i$) and a closing ($\cl_i = \er_i \di_i$), where we define an opening as an operator that is increasing, anti-extensive and and idempotent, and a closing as an operator that is increasing, extensive and idempotent
\begin{align}
  &\subeq{a}\; f \le_i g \implies \op_if \le_i \op_i g,  && \subeq{b}\; \op_i f \le_i f, && \subeq{c}\; \op_i \op_i f = \op_i f\\
  &\subeq{a}\; f \le_i g \implies \cl_if \le_i \cl_i g,  && \subeq{b}\; f \le_i \cl_i f, && \subeq{c}\; \cl_i \cl_i f = \cl_i f
\end{align}

\paragraph{}For two images $f,g \in \mathcal{F}$ we define the preorder $\le_i$ as
\begin{equation}
  f \le_i g \iff [\forall x](f_i(x) \le g_i(x))\label{eq:preorder}
\end{equation}
This preorder is not antisymmetric, as we can have $f \le_i g$ and $g \le_i f$ but not $f = g$.

\begin{theorem}
  \label{thm:adjunction}
  $\di_i$ and $\er_i$ form an adjunction in the preorder $\le_i$
  \begin{equation}
    \label{eq:adjunction}
    \di_i(f;B_r) \le_i g \iff f \le_i \er_i(g;B_r)
  \end{equation}
\end{theorem}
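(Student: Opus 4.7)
The plan is to reduce the adjunction to the well-known grayscale case by exploiting the fact that the preorder $\le_i$ defined in \refeq{preorder} depends only on the $i$-th component of an image. By inspection of \refeq{dilation} and \refeq{erosion}, the $i$-th component of $\di_i(f;B_r)$ is exactly the grayscale dilation $\di(f_i;B_r)$ of \refeq{grayscale-dilation}, and the $i$-th component of $\er_i(g;B_r)$ is the grayscale erosion $\er(g_i;B_r)$ of \refeq{grayscale-erosion}. All the remaining cases in \refeq{dilation} and \refeq{erosion} (the rescaling of non-$i$ categories and the $\theta$ term) affect only components $k \ne i$ and are therefore invisible to $\le_i$; they play no role here.

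With this reduction, the remaining task is to verify the classical adjunction between grayscale dilation and erosion. Unpacking $\di_i(f;B_r) \le_i g$ gives
\begin{equation*}
  \max_{\{y \mid y-x \in B_r\}} f_i(y) \le g_i(x) \quad \text{for all } x,
\end{equation*}
which is equivalent to $f_i(y) \le g_i(x)$ for every pair $(x,y)$ with $y-x \in B_r$. Similarly, $f \le_i \er_i(g;B_r)$ is equivalent to $f_i(y) \le g_i(z)$ for every pair $(y,z)$ with $z-y \in B_r$. Since $B_r$ is a closed ball centered at the origin, we have $-B_r = B_r$, so relabeling $z = x$ identifies the two universally quantified conditions and the biconditional follows.

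The main (and essentially only) subtlety is the symmetry $-B_r = B_r$; without it, the adjunction would hold only after reflecting the structuring element on one side. Because closed balls centered at the origin are manifestly symmetric, no additional hypothesis is required, and the proof reduces to this one-line observation combined with the channel-wise reduction to the classical grayscale adjunction. I expect the bulk of the work in the appendix to lie not in \refthm{adjunction} itself but in the subsequent verification that $\op_i = \di_i \er_i$ and $\cl_i = \er_i \di_i$ are genuine openings and closings, where the non-$i$ components and the $\theta$ mechanism from \refeq{theta} do matter.
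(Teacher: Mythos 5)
Your proof is correct and follows essentially the same route as the paper: reduce to the $k=i$ component, observe that $\le_i$ ignores all other components, and invoke the classical grayscale adjunction. The only difference is that you spell out the grayscale adjunction (including the symmetry $-B_r = B_r$) explicitly, whereas the paper cites it as standard.
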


\begin{proof}
Since categories $j \ne i$ have no influence on $\le_i$ we only need to consider the case where $k = i$ in \refeq{dilation} and \refeq{erosion}. These cases are standard grayscale dilation and erosion that form an adjunction.
\end{proof}

\begin{lemma}
  $\di_i$ and $\er_i$ are increasing in $\le_i$
\end{lemma}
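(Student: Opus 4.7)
The plan is to reduce the claim to a statement about standard grayscale dilation and erosion, which are well known to be increasing under the pointwise ordering of real-valued functions. The key observation is that the preorder $\le_i$ defined in \refeq{preorder} only inspects the $i$-th coordinate of each image, so to prove monotonicity it suffices to compare the $i$-th coordinate of the outputs. From the definitions in \refeq{dilation} and \refeq{erosion}, the $i$-th coordinate of $\di_i(f;B_r)(x)$ is exactly $\di(f_i;B_r)(x)$, and similarly for $\er_i$.

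First I would assume $f \le_i g$, which by \refeq{preorder} means $f_i(x) \le g_i(x)$ for every $x$. Then I would apply the pointwise monotonicity of the sup/max and inf/min operations used in grayscale dilation and erosion from \refeq{grayscale-dilation} and \refeq{grayscale-erosion}: taking maxima over a fixed neighborhood preserves the pointwise order, as does taking minima. This yields $\di(f_i;B_r)(x) \le \di(g_i;B_r)(x)$ and $\er(f_i;B_r)(x) \le \er(g_i;B_r)(x)$ for all $x$, which is exactly $\di_i(f;B_r) \le_i \di_i(g;B_r)$ and $\er_i(f;B_r) \le_i \er_i(g;B_r)$.

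An alternative, even shorter route uses \refthm{adjunction}: in any preorder, an operator that is the upper (resp.\ lower) adjoint of some other operator is automatically increasing. Since we have already established that $\di_i$ and $\er_i$ form an adjunction in $\le_i$, monotonicity of both follows formally by the standard argument (taking $g = \er_i(h;B_r)$ in one direction of the adjunction, and $f = \di_i(h;B_r)$ in the other, then using reflexivity of $\le_i$).

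There is no real obstacle here; the only subtlety is making explicit that the rescaling of the non-$i$ categories in the second branches of \refeq{dilation} and \refeq{erosion} is irrelevant, because $\le_i$ ignores those coordinates entirely. I would state this explicitly at the outset so that the proof reduces cleanly to the one-dimensional grayscale case already covered in \refsec{binary-grayscale}.
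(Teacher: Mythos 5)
Your proof is correct and takes essentially the same approach as the paper: the preorder $\le_i$ only sees the $i$-th coordinate, which under $\di_i$ and $\er_i$ is exactly grayscale dilation and erosion of $f_i$, and these are increasing. Your alternative route via the adjunction (both adjoints of a Galois connection are automatically increasing) is also valid and is a nice formal shortcut, though the paper does not use it.
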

\begin{proof}
  Follows by the same argument as for \refthm{adjunction}.
\end{proof}

\begin{theorem}
  For a fixed structuring element $B_r$, $\di_i$ and $\er_i$ satisfy
  \begin{align}
                     f &\le_i \er_i(\di_i(f)) \label{eq:f-edf}\\ 
        \di_i(\er_i(f)) &\le_i f              \label{eq:def-f}\\
    \di_i(\er_i(\di_i(f))) &= \di_i(f)        \label{eq:dedf-df}
  \end{align}
\end{theorem}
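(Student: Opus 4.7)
The plan is to obtain \refeq{f-edf} and \refeq{def-f} as immediate consequences of the adjunction in \refthm{adjunction}, and then to derive \refeq{dedf-df} by combining monotonicity of $\di_i$ (from the preceding lemma) with a direct pointwise check on the $k \ne i$ components.

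For \refeq{f-edf}, I would start from the reflexive inequality $\di_i(f;B_r) \le_i \di_i(f;B_r)$ and apply \refthm{adjunction} with $g = \di_i(f;B_r)$; the right-hand side of the adjunction then gives $f \le_i \er_i(\di_i(f;B_r);B_r)$. Symmetrically, $\er_i(f;B_r) \le_i \er_i(f;B_r)$ combined with \refthm{adjunction}, instantiating the ``$f$'' role by $\er_i(f;B_r)$, yields \refeq{def-f}.

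For \refeq{dedf-df}, applying the monotone $\di_i$ to \refeq{f-edf} produces $\di_i(f) \le_i \di_i(\er_i(\di_i(f)))$, while \refeq{def-f} with $f \mapsto \di_i(f)$ gives the reverse $\di_i(\er_i(\di_i(f))) \le_i \di_i(f)$. Since $\le_i$ compares only the $i$-th component, these two inequalities immediately yield equality of the $i$-components, which also matches the classical grayscale identity $\di \er \di = \di$ applied to $f_i$. Because \refeq{dedf-df} asserts pointwise equality of full images and $\le_i$ is not antisymmetric, the $k \ne i$ components still need to be matched directly using the explicit formulas \refeq{dilation} and \refeq{erosion}. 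Writing $g = \di_i(f)$, $h = \er_i(g)$, $u = \di_i(h)$, in the generic case $g_i(x) < 1$ (whence also $h_i(x) \le g_i(x) < 1$) the rescaling branches of both formulas apply and the nested prefactors telescope to $u_k(x) = (1 - g_i(x)) f_k(x)/\omega_i(f)(x) = g_k(x)$.

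The main obstacle is the degenerate case $g_i(x) = 1$, where the $\theta$-branch of \refeq{erosion} is triggered and its output is not obviously compatible with the chain. The escape is that $u_i(x) = g_i(x) = 1$, so the dilation prefactor $1 - u_i(x)$ in \refeq{dilation} collapses $u_k(x)$ to $0$, and symmetrically the same prefactor already set $g_k(x) = 0$. Thus the $\theta$-generated value of $h_k(x)$ is multiplied by zero when forming $u_k(x)$, and the identity holds pointwise on all of $\D$.
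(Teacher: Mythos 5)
Your proposal is correct and follows essentially the same route as the paper: \refeq{f-edf} and \refeq{def-f} come from instantiating the adjunction at the reflexive inequalities, and \refeq{dedf-df} is verified componentwise, with the $k=i$ component reducing to the grayscale identity $\di\er\di=\di$, the generic $k\ne i$ case telescoping through the rescaling branches, and the degenerate case $(\di f_i)(x)=1$ handled by noting that the prefactor $1-(\di f_i)(x)$ annihilates both sides. Your extra monotonicity sandwich for the $i$-component is redundant given the direct grayscale computation, but harmless.
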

\begin{proof}
  \refeq{f-edf} and \refeq{def-f} follows from substitution into \refeq{adjunction} with $g = \di_i(f)$ and $f = \er_i(g)$
  \begin{alignat}{6}
    \delta_i(f;B_r)        &\le_i \di_i(f;B_r) &\; \Longrightarrow \; & &         f & \le_i \er_i(\di_i(g;B_r);B_r)\\
    \di_i(\er_i(g;B_r);B_r) &\le_i g            &\; \Longleftarrow  \; & & \er_i(g;B_r) & \le_i \er_i(g;B_r)
  \end{alignat}

  To show \refeq{dedf-df} we consider the three cases (I) $k = i$, (II) $k \ne i$ with $[\forall x](f_i(x) < 1)$, and (III) $k \ne i$ with $[\exists x](f_i(x) = 1)$. For brevity, we leave out the structuring element $B_r$, pixel index $x$ and parentheses from operator and function application in the following.
  
  \paragraph{}For (I) $k = i$ we can directly substitute the definitions in \refeq{dilation} and \refeq{erosion} to get
  \begin{align}
    (\di_i\er_i\di_if)_i &= \di\er\di f_i \\
                        &= \di f_i \\
                        &= (\di_if)_i,
  \end{align}
  where the second step follows from the properties of the standard grayscale operations.

  \paragraph{}For (II) $k \ne i$ with $[\forall x](f_i(x) < 1)$ we can ignore the third case in \refeq{erosion}. After substitution and cancellation of terms we get,
  \begin{align}
    (\di_i\er_i\di_i f)_k
      &=  \left[1 - \di f_i\right] \frac{f_k}{f_J}
        = (\di_i f)_k
  \end{align}

  \paragraph{}For (III) $k \ne i$ with $[\exists x](f_i(x) = 1)$, the third case in \refeq{erosion} is only relevant when $(\di_i f)_i(x) = (\di f_i)(x) = 1$, which leads to both sides of \refeq{dedf-df} being zero
\begin{align}
  (\di_i\er_i\di_if)_k &= [1 - \di f_i] \frac{\theta(\er_i \di_i f)_k}{\sum_{j\in J} \theta(\er_i\di_if)_j} = 0\\
  (\di_i f)_k &= [1 - \di f_i] \frac{f_k}{f_J} = 0
\end{align}
So $\di_i\er_i\di_if = \di_if$
\end{proof}

\begin{corollary}
  $\op_i = \di_i \er_i$ is an opening and $\cl_i = \er_i \di_i$ is a closing
\end{corollary}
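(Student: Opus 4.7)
The plan is to read off both statements directly from \refthm{adjunction} and the increasingness lemma, since all six defining properties of opening and closing fall out of what has already been proved, with no new case analysis required.

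For $\op_i = \di_i \er_i$ I would proceed as follows. Property (a), monotonicity in $\le_i$, is immediate because $\op_i$ is the composition of two $\le_i$-increasing operators. Property (b), $\op_i f \le_i f$, is exactly \refeq{def-f}. For (c) I would instantiate \refeq{dedf-df} at $\er_i f$ in place of $f$, obtaining $\di_i \er_i \di_i \er_i f = \di_i \er_i f$, which is precisely $\op_i \op_i f = \op_i f$.

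The argument for $\cl_i = \er_i \di_i$ is symmetric. Monotonicity is again a composition of increasing maps. Extensivity, $f \le_i \cl_i f$, is \refeq{f-edf}. For idempotence I would apply $\er_i$ to both sides of the equality \refeq{dedf-df}, yielding $\er_i \di_i \er_i \di_i f = \er_i \di_i f$, i.e., $\cl_i \cl_i f = \cl_i f$.

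The main obstacle is not in the corollary itself but already handled upstream in the proof of \refeq{dedf-df}, where the three cases on $k$ and on whether $f_i$ attains $1$ had to be analyzed separately. One subtle point worth flagging is that the two idempotence arguments genuinely need the full equality in \refeq{dedf-df}; the two inequalities \refeq{f-edf} and \refeq{def-f} alone would not suffice, since $\le_i$ is a preorder and not antisymmetric, so equality of $\op_i\op_i f$ and $\op_i f$ (respectively $\cl_i\cl_i f$ and $\cl_i f$) cannot be recovered from a pair of $\le_i$-inequalities in general.
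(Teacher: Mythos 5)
Your proof is correct, and every step checks out: monotonicity from composing the two increasing operators of the Lemma, anti-extensivity and extensivity from \refeq{def-f} and \refeq{f-edf}, and the two idempotence identities obtained by substituting $\er_i f$ into \refeq{dedf-df} and by applying $\er_i$ to both sides of \refeq{dedf-df}, respectively. Your closing remark is also the right one to make: since $\le_i$ is merely a preorder, the usual poset argument that derives idempotence of $\di_i\er_i$ from the pair of inequalities \refeq{f-edf} and \refeq{def-f} only yields $\op_i\op_i f \le_i \op_i f$ and $\op_i f \le_i \op_i\op_i f$, which do not imply equality; the genuine equality \refeq{dedf-df} is indispensable, and that is precisely why the paper proves it. The only difference from the paper is that the paper does not carry out this derivation at all --- its proof of the corollary is a one-line citation of Corollary~1 in \cite{vandegronde2017nonscalar}, which packages exactly the implication you have spelled out (adjunction in a preorder, increasingness, plus $\di_i\er_i\di_i = \di_i$ imply opening and closing). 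Your version is therefore a self-contained inline proof of the outsourced step; it buys independence from the external reference at the cost of a short paragraph, and it makes visible where the preorder subtlety actually bites.
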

\begin{proof}
  Corollary 1. in \cite{vandegronde2017nonscalar}
\end{proof}
\end{appendices}

\end{document}